\documentclass[preprint,12pt]{elsarticle}

\usepackage{hyperref}
\usepackage{booktabs}
\usepackage{multirow}
\usepackage{array}
\usepackage{xcolor}
\usepackage{amsthm}
\usepackage{algorithm}
\usepackage{algpseudocode}
\newtheorem{theorem}{Theorem}[section]
\newtheorem{corollary}{Corollary}[section]
\newtheorem*{remark}{Remark}
\usepackage{amssymb}
\usepackage{etoolbox}
\usepackage{amsmath}

\begin{document}

\begin{frontmatter}



\title{Optimal-Transport-Guided Functional Flow Matching for Turbulent Field Generation in Hilbert Space}


\author[label1,label2]{Kunpeng Li}
\author[label1]{Chenguang Wan}
\author[label1]{Zhisong Qu}
\author[label1]{Kyungtak Lim}
\author[label3]{Virginie Grandgirard}
\author[label1,label3]{Xavier Garbet}
\author[label5]{Hua Yu}
\author[label2,label4]{Ong Yew Soon}

\affiliation[label1]{organization={School of Physical and Mathematical Sciences, Nanyang Technological University},
            city={,},
            postcode={637371},
            country={Singapore}}

\affiliation[label2]{organization={College of Computing and Data Science, Nanyang Technological University},
            city={Singapore},
            postcode={639798},
            country={Singapore}}

\affiliation[label3]{organization={CEA, IRFM},
            postcode={F-13108},
            state={Saint Paul-lez-Durance},
            country={France}}

\affiliation[label4]{organization={Centre for Frontier AI Research, Agency for Science, Technology and Research},
            city={Singapore},
            postcode={138648},
            country={Singapore}}

\affiliation[label5]{organization={Dalian Jiaotong University},
            city={Dalian},
            postcode={116028},
            country={China}}

\begin{abstract}
High-fidelity modeling of turbulent flows requires capturing complex spatiotemporal dynamics and multi-scale intermittency, posing a fundamental challenge for traditional knowledge-based systems. While deep generative models, such as diffusion models and Flow Matching, have shown promising performance, they are fundamentally constrained by their discrete, pixel-based nature. This limitation restricts their applicability in turbulence computing, where data inherently exists in a functional form. To address this gap, we propose Functional Optimal Transport Conditional Flow Matching (FOT-CFM), a generative framework defined directly in infinite-dimensional function space.  
Unlike conventional approaches defined on fixed grids, FOT-CFM treats physical fields as elements of an infinite-dimensional Hilbert space, and learns resolution-invariant generative dynamics directly at the level of probability measures.
By integrating Optimal Transport (OT) theory, we construct deterministic, straight-line probability paths between noise and data measures in Hilbert space. This formulation enables simulation-free training and significantly accelerates the sampling process. We rigorously evaluate the proposed system on a diverse suite of chaotic dynamical systems, including the Navier-Stokes equations, Kolmogorov Flow, and Hasegawa-Wakatani equations, all of which exhibit rich multi-scale turbulent structures. Experimental results demonstrate that FOT-CFM achieves superior fidelity in reproducing high-order turbulent statistics and energy spectra compared to state-of-the-art baselines.
\end{abstract}

\begin{graphicalabstract}

\centering
\includegraphics[width=1\textwidth]{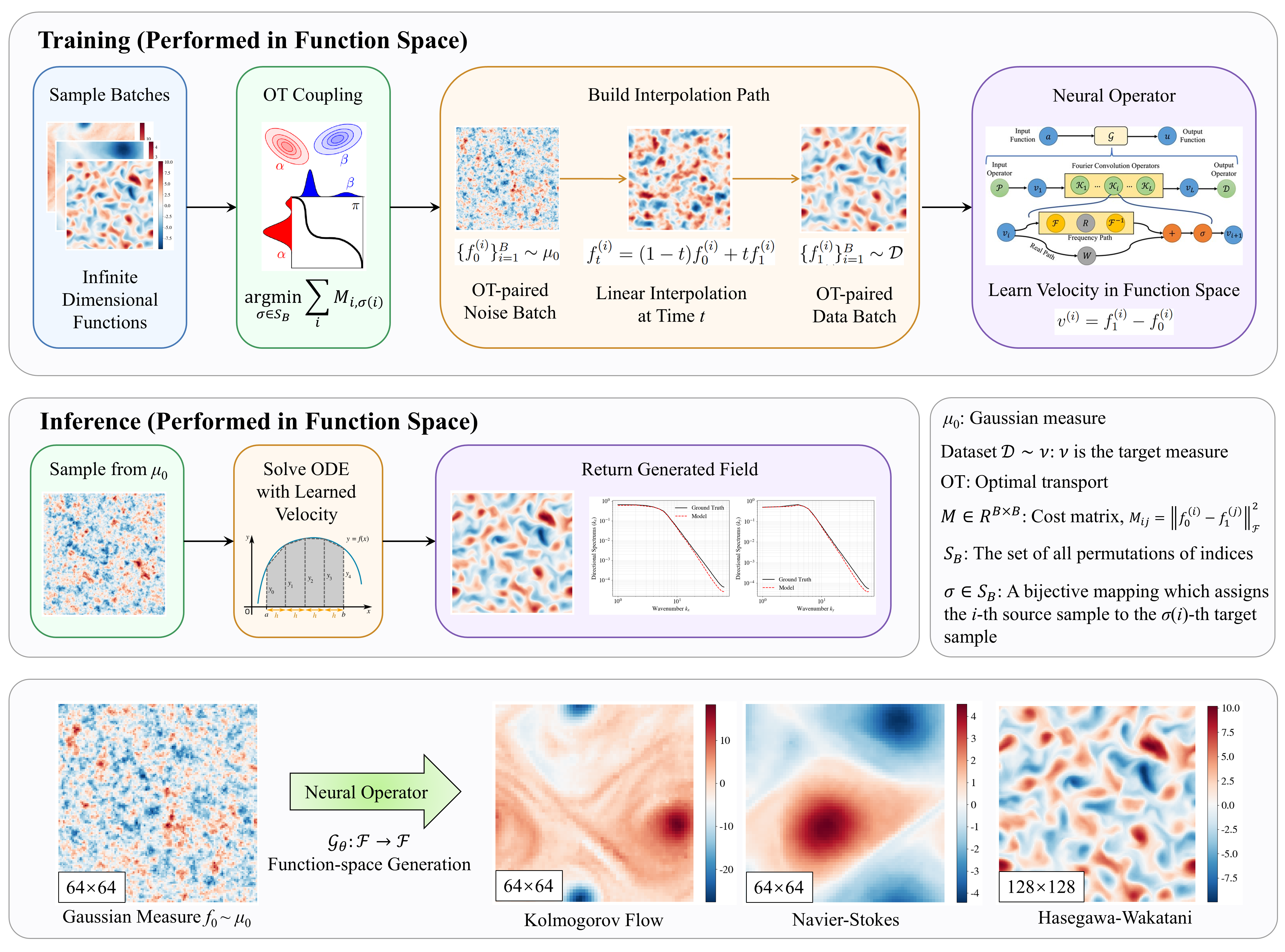}
Function-space OT alignment enables fast and high-fidelity turbulence generation.

\end{graphicalabstract}

\begin{highlights}
\item We generalize Conditional Flow Matching (CFM) from finite-dimensional Euclidean spaces to infinite-dimensional Hilbert spaces. Specifically, we formulate conditional-to-marginal path mixing directly at the level of probability measures and weak continuity equations, which avoids density-based constructions that are not natural in infinite dimensions. We further prove that the aggregated conditional vector field in function space induces the correct marginal probability path, and establish the equivalence between the conditional and marginal training objectives (up to a parameter-independent constant).

\item We incorporate Optimal Transport (OT) theory into functional CFM to construct OT-guided straight-line probability paths between the source (noise) and target (data) measures. By enforcing transport-aligned trajectories, FOT-CFM rectifies the generative flow and reduces trajectory curvature. Combined with the simulation-free CFM training objective, this yields high-quality sampling with significantly fewer NFE than diffusion-based or curved ODE-based baselines.

\item By parameterizing the vector field with Neural Operators, FOT-CFM inherently learns the continuous physical operator independent of the discretization mesh, enabling zero-shot super-resolution. Practical benchmarks on complex chaotic systems, including Navier-Stokes, Kolmogorov Flow, and Hasegawa-Wakatani equations, demonstrate that our method accurately reproduces high-order turbulent statistics and energy spectra, while achieving a significant reduction in inference latency compared with baseline methods.

\end{highlights}

\begin{keyword}
Surrogate Model, Generative Model, Infinite Function Spaces, Operator Learning
\end{keyword}

\end{frontmatter}



\section{Introduction}\label{sec1}
Turbulent flows are ubiquitous in both natural and engineering systems, spanning atmospheric circulation and ocean currents to aerodynamic design and combustion processes \cite{pope2001turbulent}. Understanding and modeling turbulence is essential for climate prediction \cite{hussain2012evaluation}, energy technologies \cite{conway2008turbulence, fouladi2020wind}, and industrial fluid dynamics \cite{wang2024recent}. However, achieving high-fidelity turbulence modeling remains a fundamental challenge in scientific computing and knowledge-based systems, due to the complex spatiotemporal dynamics and pronounced multiscale structure of turbulent flows. Motivated by the high cost of direct numerical simulation and the growing demand for fast surrogate generation, generative models (GMs) have recently attracted increasing attention for turbulence modeling \cite{drygala2022generative, drygala2024comparison}. Nevertheless, a fundamental representation mismatch remains: each sample in turbulence is more naturally described as a physical field over a spatial domain, that is, as a function rather than as a finite-dimensional vector or tensor defined on fixed discretizations. This function-valued nature is not well aligned with most existing generative modeling frameworks, which are predominantly formulated in finite-dimensional Euclidean spaces (e.g., vectors in $\mathbb{R}^n$).

Although generative models have achieved impressive performance across a wide range of domains, including images \cite{kim2026multi, dhariwal2021diffusion, kang2023scaling}, 3D data \cite{gao2022get3d, achlioptas2018learning}, audio \cite{zhao2025tia2v, oord2016wavenet, vasquez2019melnet}, and video \cite{video1, video2}, with increasing adoption in machine learning security \cite{security1, security2}, natural language processing \cite{NL1,NL2}, protein design \cite{protein1, protein2}, and physics and engineering problems \cite{chen2021fast, liu2025cgan, chen2026hydraulic}, their underlying discrete parameterizations are not well suited to scientific settings, where consistency across resolutions and computational meshes is often essential. 

Similar function-valued data arises broadly in PDE-governed applications such as seismology, geophysics, oceanography, aerodynamic vehicle design, and weather forecasting \cite{yang2021seismicwavepropagationinversion, Wen}. Functional representations are also standard in 3D vision and graphics, where scenes may be parameterized as radiance fields \cite{mildenhall2021nerf} or signed distance functions \cite{park2019deepsdf}. These observations motivate generative modeling frameworks defined directly in infinite-dimensional function spaces.

Substantial progress has been made in adapting generative models to infinite-dimensional spaces \cite{dupont2022data, li2025functional, zhang2025flow}. A pivotal development is Denoising Diffusion Operator (DDO) \cite{ddo}. DDO defines the score operator using the Fréchet derivative of the log-density with respect to a reference Gaussian measure (rather than the translation-invariant Lebesgue measure used in finite dimensions). To approximate this score in practice, DDO generalizes the denoising score matching objective \cite{song2020score} to Hilbert spaces. Sampling is then performed by reversing the diffusion process via infinite-dimensional Langevin dynamics using the learned score operator.

In parallel, flow-based generative modeling \cite{lipman2022flow} has been extended to function spaces through the Functional Flow Matching (FFM) \cite{ffm}, which considers a Gaussian noise corruption process in Hilbert space. FFM constructs a path of conditional Gaussian measures that approximately interpolates between a fixed reference Gaussian measure and a given function. By marginalizing these conditional paths over the data distribution, a path of measures connecting the noise and data distributions is obtained. This construction establishes couplings between source and target samples that implicitly correspond to an optimal transport map between Gaussians in the Euclidean setting.

Notwithstanding these theoretical strides, developing an efficient and generalized flow-based framework for functions remains impeded by two major technical challenges:

First, while pioneering works have demonstrated the feasibility of generative modeling directly in Hilbert spaces, existing function-space generative frameworks still lack a unified and rigorous conditional–marginal consistency theory in the infinite-dimensional setting. In particular, density-based marginalization arguments commonly used in finite-dimensional Euclidean spaces do not extend straightforwardly to Hilbert spaces, several key questions remain unresolved: whether conditional path mixing is well-defined at the level of probability measures, whether the aggregated conditional vector field induces the correct marginal probability path, and whether the tractable conditional training objective is equivalent to the ideal marginal objective.

Second, geometric and dynamical choices in flow-path design can translate into high computational cost at inference time. DDO \cite{ddo} generation process relies on many iterative denoising steps (e.g., annealed Langevin dynamics or numerical SDE solvers) to produce high-quality samples. The sampling procedure of FFM \cite{ffm} via numerical ODE integration still incurs substantial computational cost when the induced flow is difficult to integrate accurately. More fundamentally, existing functional frameworks do not explicitly enforce a globally optimal transport geometry between the source and target measures, which can lead to poorly aligned, high-curvature characteristic flows.

\begin{figure}[h]
\centering
\includegraphics[width=1\textwidth]{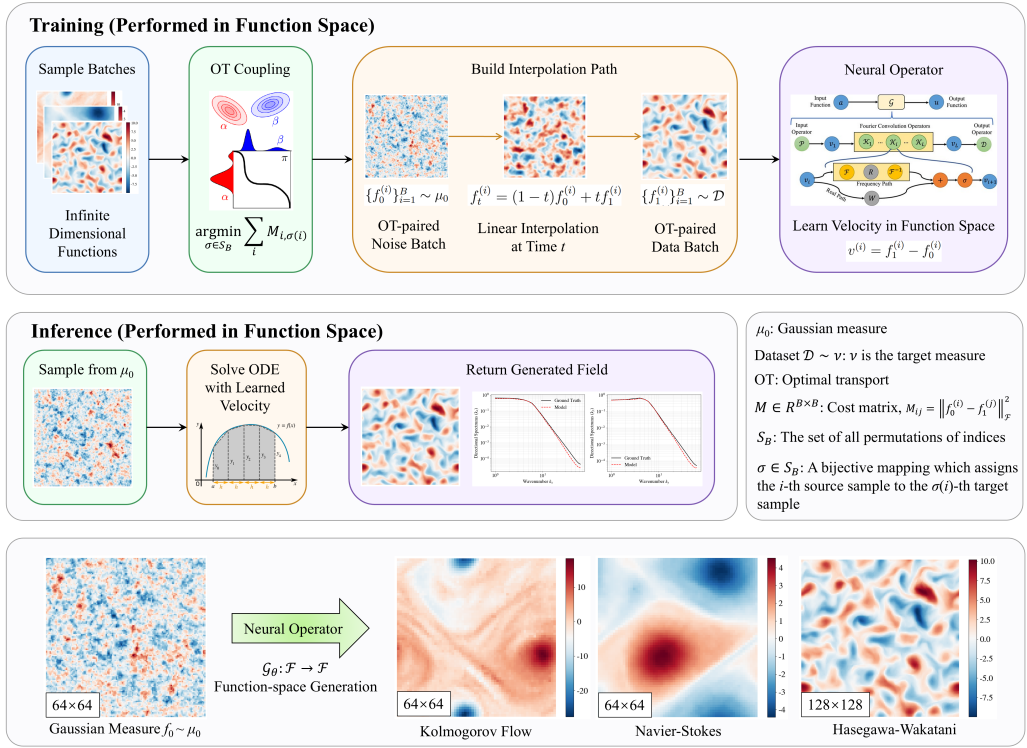}
\caption{FOT-CFM in infinite-dimensional function space, OT-aligned operator training and ODE Sampling}\label{fig:framework}
\end{figure}

To address these limitations, we propose Functional Optimal Transport Conditional Flow Matching (FOT-CFM), a unifying framework (shown as Fig. \ref{fig:framework}) for efficient and resolution-invariant generative modeling in Hilbert space. Our main contributions are summarized as follows:

(1) We generalize Conditional Flow Matching (CFM) from finite-dimensional Euclidean spaces to infinite-dimensional Hilbert spaces. Specifically, to address the first challenge, we formulate conditional-to-marginal path mixing directly at the level of probability measures and weak continuity equations, which avoids density-based constructions that are not natural in infinite dimensions. We further prove that the aggregated conditional vector field in function space induces the correct marginal probability path, and establish the equivalence between the conditional and marginal training objectives (up to a parameter-independent constant).

(2) Aiming at the second challenge, we incorporate Optimal Transport (OT) theory \cite{villani2008optimal} into functional CFM to construct OT-guided straight-line probability paths between the source (noise) and target (data) measures. By enforcing transport-aligned trajectories, FOT-CFM rectifies the generative flow and reduces trajectory curvature. Combined with the simulation-free CFM training objective, this yields high-quality sampling with significantly fewer NFE than diffusion-based or curved ODE-based baselines.

(3) By parameterizing the vector field with Neural Operators, FOT-CFM inherently learns the continuous physical operator independent of the discretization mesh, enabling zero-shot super-resolution. Practical benchmarks on complex chaotic systems, including Navier-Stokes, Kolmogorov Flow, and Hasegawa-Wakatani equations, demonstrate that our method accurately reproduces high-order turbulent statistics and energy spectra, while achieving a significant reduction in inference latency compared with baseline methods.

The rest of the paper is arranged as follows: We first introduce the theoretical background and terminology in Section 2. Section 3 formally presents the methodology of the FOT-CFM. Section 4 is dedicated to empirical validation, where we benchmark the proposed method against competitive baselines across multiple chaotic flow scenarios. Finally, Section 5 provides concluding remarks and directions for future work.

\section{Background and Terminology}\label{sec2}

\subsection{Functional Flow Matching}\label{subsec1}
Functional Flow Matching (FFM) \cite{ffm} extends classical flow matching from finite-dimensional Euclidean spaces to
infinite-dimensional function spaces. Let $(\mathcal{F},\langle\cdot,\cdot\rangle_{\mathcal{F}})$ be a separable Hilbert
space of functions with Borel $\sigma$-algebra $\mathcal{B}(\mathcal{F})$.
Let the reference measure be a Gaussian measure
$\mu_0=\mathcal{N}(m_0,C_0)$ on $\mathcal{F}$, with mean $m_0\in\mathcal{F}$ and covariance operator $C_0:\mathcal{F}\to\mathcal{F}$.
FFM learns a time-dependent velocity field
$u:[0,1]\times\mathcal{F}\to\mathcal{F}$ that transports $\mu_0$ to a target distribution $\mu_1=\nu$
through a continuous path of measures $(\mu_t)_{t\in[0,1]}$ satisfying the weak continuity equation:
\begin{equation}
\int_{0}^{1}\!\int_{\mathcal{F}}
\Big(\partial_t \psi(g,t)+\big\langle u_t(g),\nabla_g\psi(g,t)\big\rangle_{\mathcal{F}}\Big)\,d\mu_t(g)\,dt =0,
\qquad 
\label{eq:ffm_continuity}
\end{equation}
for all appropriate test functions $\psi:\mathcal{F}\times[0,1]\to\mathbb{R}$, and $\mu_{t=0}=\mu_0,\ \mu_{t=1}=\mu_1$.
Sampling $f_0\sim\mu_0$, a generated function is obtained by integrating the function-space ODE

\begin{equation}
\frac{d f_t}{dt}=u(t,f_t),\qquad f_{t=0}=f_0,
\label{eq:ffm_ode}
\end{equation}
whose terminal state satisfies $f_1\sim\nu$.

For a given velocity field $u_t(\cdot)$, define the associated flow maps $\phi_t:\mathcal{F}\to\mathcal{F}$ by
$f_t=\phi_t(f_0)$, where $\phi_t$ satisfies the functional differential equation

\begin{equation}
\frac{\partial}{\partial t}\phi_t = u_t\circ \phi_t,
\qquad
\phi_0 = \mathrm{Id}_{\mathcal{F}},
\label{eq:ffm_flow}
\end{equation}
with $\mathrm{Id}_{\mathcal{F}}$ the identity operator on $\mathcal{F}$.
The measure path can be generated by pushforward:
$\mu_t=(\phi_t)_{\#}\mu_0$.

\paragraph{Conditional paths and marginalization}
The marginal (global) velocity field needed for the standard regression objective is typically intractable in function spaces.
FFM therefore introduces a conditional velocity $u_t^f$ conditioned on a target function $f\sim \nu$,
together with conditional paths $(\mu_t^f)_{t\in[0,1]}$ that interpolate between $\mu_0$ and an $f$-centered measure $\mu_1^f$.
Marginalizing these conditionals yields the global path and velocity:
\begin{equation}
\begin{aligned}
\mu_t(A)
&=\int_{\mathcal{F}} \mu_t^f(A)\,d\nu(f),\\
u_t(g)
&=\int_{\mathcal{F}} u_t^f(g)\,\frac{d\mu_t^f}{d\mu_t}(g)\,d\nu(f).
\end{aligned}
\label{eq:ffm_marginalization}
\end{equation}
for any $A\in\mathcal{B}(\mathcal{F})$, where $\frac{d\mu_t^f}{d\mu_t}$ is the Radon--Nikodym derivative.

\paragraph{Gaussian conditional path (closed form)}
In practice, the conditional paths are often chosen to be Gaussian:
\[
\mu_t^f = \mathcal{N}\!\big(m_t^f,(\sigma_t^f)^2 C_0\big),
\qquad
m_t^f = t f,
\qquad
\sigma_t^f = 1-(1-\sigma_{\min})t,
\]
with a small $\sigma_{\min}>0$.
Then the conditional flow and conditional velocity admit closed forms:
\begin{equation}
\begin{aligned}
\phi_t^f(f_0)=\sigma_t^f f_0 + m_t^f
& = \bigl(1-(1-\sigma_{\min})t\bigr) f_0 + t f, \\
\qquad
u_t^f(g)=\frac{\dot{\sigma}_t^f}{\sigma_t^f}\bigl(g-m_t^f\bigr)+\dot m_t^f
& =\frac{1-\sigma_{\min}}{1-(1-\sigma_{\min})t}\,(t f-g)+f.
\end{aligned}
\label{eq:ffm_closed_form}
\end{equation}
Although the theory requires $\sigma_{\min}>0$, setting $\sigma_{\min}=0$ is often used in practice without adverse effects.

\paragraph{Training objective}
The model $u_\theta(t,g)$ is trained via the conditional regression loss
\begin{equation}
\mathcal{L}_c(\theta)
=\mathbb{E}_{t,f,\;g\sim \mu_t^f}
\Big[\big\|u_t^f(g)-u_\theta(t,g)\big\|_{\mathcal{F}}^2\Big],
\label{eq:ffm_conditional_loss}
\end{equation}
which can be shown to be equivalent to the (intractable) marginal loss up to an additive constant.

\subsection{Optimal Transport}
\label{subsec:optimal_transport}

The static optimal transport (OT) problem seeks a transport plan that moves mass from one probability measure to another with minimal effort.
In the context of generative modeling on function spaces, we are particularly interested in the 2-Wasserstein distance between the source (noise) measure $\mu_0$ and the target (data) measure $\mu_1$ defined on the separable Hilbert space $\mathcal{F}$.
Consider the quadratic cost function $c(x, y) = \| x - y \|_{\mathcal{F}}^2$, which measures the squared Hilbert-space norm between two functions $x, y \in \mathcal{F}$.
The squared 2-Wasserstein distance is defined as the solution to the Kantorovich minimization problem:
\begin{equation}
    W_2^2(\mu_0, \mu_1) = \inf_{\pi \in \Pi(\mu_0, \mu_1)} \int_{\mathcal{F} \times \mathcal{F}} \| x - y \|_{\mathcal{F}}^2 \, d\pi(x, y),
    \label{eq:static_ot}
\end{equation}
where $\Pi(\mu_0, \mu_1)$ denotes the set of all joint probability measures (couplings) on $\mathcal{F} \times \mathcal{F}$ whose marginals are $\mu_0$ and $\mu_1$, respectively.
Under mild conditions (e.g., probability measures with finite second moments), a solution to Eq.~\eqref{eq:static_ot} exists \cite{villani2008optimal}, and $W_2$ defines a metric on the space of probability distributions over $\mathcal{F}$.
Crucially, the optimal coupling $\pi^*$ typically concentrates on a deterministic map (Monge map) that pushes $\mu_0$ to $\mu_1$ along geodesic paths, which in our Hilbert space setting corresponds to straight-line trajectories minimizing the kinetic energy of the flow.

While the static formulation (Eq.~\eqref{eq:static_ot}) focuses on the optimal coupling, the dynamic formulation of OT connects directly to generative flows.
The Benamou-Brenier formula \cite{benamou2000computational} establishes that the squared Wasserstein distance $W_2^2(\mu_0, \mu_1)$ is equivalent to the minimal kinetic energy required to transport mass from $\mu_0$ to $\mu_1$:
\begin{equation}
    W_2^2(\mu_0, \mu_1) = \inf_{(\mu_t, v_t)} \int_0^1 \int_{\mathcal{F}} \| v_t(x) \|_{\mathcal{F}}^2 \, d\mu_t(x) \, dt,
    \label{eq:dynamic_ot}
\end{equation}
subject to the continuity equation $\partial_t \mu_t + \nabla \cdot (v_t \mu_t) = 0$ with boundary conditions $\mu_0, \mu_1$.
The pair $(\mu_t, v_t)$ achieving this infimum defines the Wasserstein geodesic connecting $\mu_0$ and $\mu_1$.
In the Euclidean (and Hilbert) setting with the quadratic cost, this geodesic corresponds to the displacement interpolation \cite{mccann1997convexity}, where mass moves along straight lines with constant speed.
Specifically, if $\pi^*$ is the optimal coupling from the static problem, the geodesic path is given by the law of $x_t = (1-t)x_0 + t x_1$ for $(x_0, x_1) \sim \pi^*$.
Consequently, the vector field $v_t$ generating this path minimizes the transport cost and results in straight trajectories, which is the ideal target for our training objective.

\section{Methodology of the FOT-CFM}\label{sec3}

This section builds a complete pipeline from measure-theoretic foundations to practical algorithms for function-space generative modeling. Section \ref{subsec:mixture_paths} starts by formulating a mixture of conditional probability paths directly at the level of probability measures and the weak continuity equation, since density-based constructions are generally ill-defined in infinite-dimensional Hilbert spaces due to the absence of a translation-invariant Lebesgue measure. It then establishes the conditional-to-marginal consistency through rigorous results, proving that the aggregated conditional vector field induces the correct marginal probability path. Section \ref{subsec:objective} moves from path construction to learning and introduces the Functional Conditional Flow Matching (FCFM) objective. It shows that the tractable conditional objective is equivalent to the ideal marginal objective up to a parameter-independent constant, and therefore yields the same gradient, enabling efficient stochastic training by sampling. Building on this theoretical feasibility, Section \ref{subsec:FOT-CFM} addresses the issue of training efficiency by incorporating optimal transport techniques, replacing independent coupling with OT-aligned pairings and displacement interpolation to obtain straighter trajectories and lower-NFE sampling. Finally, Section \ref{subsec:algorithm} turns the framework into executable procedures by specifying the Gaussian reference measure and the training/inference algorithms.

\subsection{Mixtures of Probability Paths}\label{subsec:mixture_paths}

In finite-dimensional space, a marginal probability path can be written as a mixture of conditional density paths:
\begin{equation}
    p_t(x)=\int p_t(x\mid z)\,q(z)\,dz,
\label{eq:finite_path_mixture}
\end{equation}
where $q$ is a distribution over the conditioning variable $z$.
However, in an infinite-dimensional separable Hilbert space $(\mathcal{F},\langle\cdot,\cdot\rangle_{\mathcal{F}})$,
there is no translation-invariant Lebesgue reference measure, so density-based formulations such as
Eq.~\eqref{eq:finite_path_mixture} are in general ill-defined.
We therefore formulate the mixture path directly at the level of probability measures and the weak continuity equation
(Eq.~\eqref{eq:ffm_continuity}).

\paragraph{Mixture of conditional measures}
We take the conditioning variable to be the target function $f\sim\nu$.
For each $f\in\mathcal{F}$, let $\mu_t^f$ be a conditional probability measure on $\mathcal{F}$.
Assume that for every Borel set $A\in\mathcal{B}(\mathcal{F})$, the map $f\mapsto\mu_t^f(A)$ is measurable.
The marginal (mixture) measure $\mu_t$ is defined by
\begin{equation}
    \mu_t(A)=\int_{\mathcal{F}} \mu_t^f(A)\,d\nu(f),
    \qquad \forall A\in\mathcal{B}(\mathcal{F}).
\label{eq:mix_measure}
\end{equation}
Equivalently, for any bounded measurable $h:\mathcal{F}\to\mathbb{R}$,
\begin{equation}
    \int_{\mathcal{F}} h(g)\,d\mu_t(g)
    =
    \int_{\mathcal{F}}\left(\int_{\mathcal{F}} h(g)\,d\mu_t^f(g)\right)d\nu(f).
\label{eq:mix_integral_identity}
\end{equation}

\paragraph{Aggregating conditional vector fields}
Let $u_t^f:\mathcal{F}\to\mathcal{F}$ be the conditional vector field generating $\mu_t^f$
(in the weak continuity equation sense).
We assume the square-integrability condition
\begin{equation}
    \int_{\mathcal{F}}\int_{\mathcal{F}} \|u_t^f(g)\|_{\mathcal{F}}^2\, d\mu_t^f(g)\, d\nu(f) < \infty.
\label{eq:cond_vector_L2}
\end{equation}
The marginal vector field $u_t$ is defined implicitly via its action on $\mu_t$:
\begin{equation}
    \int_{\mathcal{F}} \langle u_t(g), \xi(g)\rangle_{\mathcal{F}}\, d\mu_t(g)
    =
    \int_{\mathcal{F}}\left(\int_{\mathcal{F}}\langle u_t^f(g), \xi(g)\rangle_{\mathcal{F}}\, d\mu_t^f(g)\right)d\nu(f),
    \quad \forall\, \xi\in L^2(\mu_t;\mathcal{F}).
\label{eq:mix_vector_weak}
\end{equation}

\begin{corollary}[Existence and Uniqueness of the Marginal Vector Field]
\label{cor:existence_uniqueness}
Under \eqref{eq:cond_vector_L2}, there exists a unique $u_t\in L^2(\mu_t;\mathcal{F})$ (unique $\mu_t$-a.e.)
satisfying \eqref{eq:mix_vector_weak}.
\end{corollary}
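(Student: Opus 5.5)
The plan is to apply the Riesz representation theorem in the Hilbert space $H:=L^2(\mu_t;\mathcal{F})$ of (equivalence classes of) Bochner square-integrable $\mathcal{F}$-valued maps, with inner product $\langle \xi,\eta\rangle_H=\int_{\mathcal{F}}\langle \xi(g),\eta(g)\rangle_{\mathcal{F}}\,d\mu_t(g)$. Since $\mathcal{F}$ is separable, $H$ is a Hilbert space. The right-hand side of \eqref{eq:mix_vector_weak} defines a map $\Lambda:H\to\mathbb{R}$,
\[
\Lambda(\xi)=\int_{\mathcal{F}}\int_{\mathcal{F}}\langle u_t^f(g),\xi(g)\rangle_{\mathcal{F}}\,d\mu_t^f(g)\,d\nu(f).
\]
I will show that $\Lambda$ is a well-defined bounded linear functional. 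Riesz then yields a unique $u_t\in H$ with $\Lambda(\xi)=\langle u_t,\xi\rangle_H$ for all $\xi\in H$, which is exactly \eqref{eq:mix_vector_weak}.

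\textbf{Key step: the bound on $\Lambda$.} By Cauchy--Schwarz in $\mathcal{F}$ and then in $L^2$ of the joint measure $d\mu_t^f(g)\,d\nu(f)$,
\[
|\Lambda(\xi)|\le \Big(\iint \|u_t^f(g)\|^2\,d\mu_t^f\,d\nu\Big)^{1/2}\Big(\iint \|\xi(g)\|^2\,d\mu_t^f(g)\,d\nu(f)\Big)^{1/2}.
\]
The first factor is finite by \eqref{eq:cond_vector_L2}. The second factor equals $\|\xi\|_H$ by the mixture identity \eqref{eq:mix_integral_identity}, applied to $h=\|\xi\|^2$. That identity is stated for bounded $h$, but it extends to nonnegative measurable $h$ by monotone convergence, using the truncations $\min(h,n)$. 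So $\|\Lambda\|\le C:=\big(\iint\|u_t^f\|^2\big)^{1/2}$, and linearity of $\Lambda$ is immediate.

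\textbf{Main obstacle: well-definedness.} Three things must be checked.
\begin{itemize}
\item[(i)] Joint measurability of $(f,g)\mapsto\langle u_t^f(g),\xi(g)\rangle$ and of $(f,g)\mapsto\|u_t^f(g)\|^2$. I will take this as part of the standing assumptions implicit in \eqref{eq:cond_vector_L2}: $(f,g)\mapsto u_t^f(g)$ is jointly measurable, and the joint measure $\gamma(df,dg)=\mu_t^f(dg)\,\nu(df)$ exists because $f\mapsto\mu_t^f(A)$ is measurable, i.e.\ $\mu_t^f$ is a kernel. Once $\gamma$ is in hand, Fubini--Tonelli justifies the iterated integrals.
\item[(ii)] Independence of $\Lambda(\xi)$ from the $\mu_t$-representative of $\xi$. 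If $\xi=\xi'$ off a $\mu_t$-null set $N$, then $\int\mu_t^f(N)\,d\nu=\mu_t(N)=0$ by \eqref{eq:mix_measure}. Hence $\mu_t^f(N)=0$ for $\nu$-a.e.\ $f$, and the integrands agree $\gamma$-a.e.
\item[(iii)] The same null-set argument shows that $\|\xi\|_H=0$ forces $\Lambda(\xi)=0$, so $\Lambda$ is a functional on equivalence classes.
\end{itemize}

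\textbf{Uniqueness.} Suppose $u_t,\tilde u_t\in H$ both satisfy \eqref{eq:mix_vector_weak}. Subtracting gives $\langle u_t-\tilde u_t,\xi\rangle_H=0$ for all $\xi\in H$. Choosing $\xi=u_t-\tilde u_t$ gives $\|u_t-\tilde u_t\|_H=0$, so $u_t=\tilde u_t$ $\mu_t$-a.e.

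Finally, the Riesz representation gives the bound $\|u_t\|_H=\|\Lambda\|\le C$. This confirms that $u_t\in L^2(\mu_t;\mathcal{F})$ with a norm controlled by \eqref{eq:cond_vector_L2}. The same bound is what is needed later for the training-objective equivalence.
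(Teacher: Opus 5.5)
Your proposal is correct and follows the paper's proof. Both bound the functional by Cauchy--Schwarz together with the mixture identity \eqref{eq:mix_integral_identity} applied to $h=\|\xi(\cdot)\|_{\mathcal F}^2$, then invoke the Riesz representation theorem on $L^2(\mu_t;\mathcal F)$ to get existence and $\mu_t$-a.e.\ uniqueness; your extra checks (extending that identity from bounded to nonnegative $h$ by monotone convergence, and independence of the $\mu_t$-representative of $\xi$) fill in details the paper leaves implicit.
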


\begin{remark}[Conditional expectation and Radon--Nikodym viewpoint]
Define the joint probability measure on $\mathcal{F}\times\mathcal{F}$ by
$\pi_t(df,dg):=\nu(df)\,\mu_t^f(dg)$, whose $g$-marginal is $\mu_t$.
Let $(F,G)\sim\pi_t$ and set $U:=u_t^{F}(G)\in L^2(\pi_t;\mathcal{F})$.
Then $u_t(G)$ can be identified with the Bochner conditional expectation $\mathbb{E}[U\mid G]$.
Equivalently, the $\mathcal{F}$-valued vector measure
\[
\mathbf{J}_t(B):=\int_{\mathcal{F}}\int_{B} u_t^f(g)\,\mu_t^f(dg)\,\nu(df),
\qquad \forall B\in\mathcal{B}(\mathcal{F}),
\]
satisfies $\mathbf{J}_t\ll \mu_t$ under \eqref{eq:cond_vector_L2}, and $u_t=d\mathbf{J}_t/d\mu_t$ in $L^2(\mu_t;\mathcal{F})$.
Moreover, if $\mu_t^f\ll \mu_t$ for $\nu$-a.e.\ $f$, then \eqref{eq:mix_vector_weak} implies the pointwise aggregation formula
\begin{equation}
    u_t(g)=\int_{\mathcal{F}} u_t^f(g)\,\frac{d\mu_t^f}{d\mu_t}(g)\,d\nu(f),
    \qquad \mu_t\text{-a.e. }g,
\label{eq:mix_vector_pointwise}
\end{equation}
which matches the marginalization identity in Eq.~\eqref{eq:ffm_marginalization}.
\end{remark}

Having established the definitions of the marginal measure $\mu_t$ and the marginal vector field $u_t$, we now examine their dynamical consistency.
A fundamental property of the continuity equation in its weak form (Eq.~\eqref{eq:ffm_continuity}) is its linearity with respect to the signed measure.
Intuitively, since the marginal path is constructed as a superposition of conditional paths, and each conditional pair $(\mu_t^f, u_t^f)$ satisfies the continuity equation, the aggregated pair $(\mu_t, u_t)$ should preserve this property.
The following theorem rigorously formalizes this intuition, guaranteeing that the regression target $u_t$ defined in Eq.~\eqref{eq:mix_vector_weak} is indeed the correct vector field generating the data distribution.

\begin{theorem}[Mixture preserves the weak continuity equation]
\label{thm:mixture_continuity}
Assume that for $\nu$-a.e.\ $f\in\mathcal{F}$, the conditional pair $(\mu_t^f,u_t^f)$ satisfies the weak continuity equation
\eqref{eq:ffm_continuity}, namely
\begin{equation}
\int_{0}^{1}\!\int_{\mathcal{F}}
\Big(\partial_t \psi(g,t)+\big\langle u_t^f(g),\nabla_g\psi(g,t)\big\rangle_{\mathcal{F}}\Big)\,d\mu_t^f(g)\,dt =0,
\label{eq:cond_ffm_continuity}
\end{equation}
for all appropriate test functions $\psi:\mathcal{F}\times[0,1]\to\mathbb{R}$
(e.g.\ $\psi(\cdot,0)=\psi(\cdot,1)=0$ and $\psi,\partial_t\psi,\nabla_g\psi$ bounded),
and assume the measurability/integrability conditions needed for Fubini/Tonelli (e.g.\ \eqref{eq:cond_vector_L2} with bounded $\nabla_g\psi$).
Let $\mu_t$ be defined by \eqref{eq:mix_measure} and let $u_t$ be defined by \eqref{eq:mix_vector_weak}.
Then $(\mu_t,u_t)$ satisfies \eqref{eq:ffm_continuity}.
\end{theorem}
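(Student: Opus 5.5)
The plan is to verify the weak continuity equation for $(\mu_t,u_t)$ directly, by expanding each term and pushing the integrals down to the conditional level. Fix an admissible test function $\psi$, meaning $\psi(\cdot,0)=\psi(\cdot,1)=0$ and $\psi,\partial_t\psi,\nabla_g\psi$ bounded and measurable. The left-hand side of \eqref{eq:ffm_continuity} for the pair $(\mu_t,u_t)$ splits into a time-derivative term
\[
I_1:=\int_0^1\!\int_{\mathcal{F}}\partial_t\psi(g,t)\,d\mu_t(g)\,dt
\]
and a transport term
\[
I_2:=\int_0^1\!\int_{\mathcal{F}}\langle u_t(g),\nabla_g\psi(g,t)\rangle_{\mathcal{F}}\,d\mu_t(g)\,dt .
\]
I will show that each equals the corresponding $\nu$-average of conditional terms. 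The claim then follows from \eqref{eq:cond_ffm_continuity}.

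For $I_1$, for each fixed $t$ the function $h=\partial_t\psi(\cdot,t)$ is bounded and measurable. The mixture identity \eqref{eq:mix_integral_identity} therefore gives
\[
\int_{\mathcal{F}}\partial_t\psi\,d\mu_t=\int_{\mathcal{F}}\int_{\mathcal{F}}\partial_t\psi\,d\mu_t^f\,d\nu(f).
\]
For $I_2$, fix $t$ and take $\xi(g)=\nabla_g\psi(g,t)$. This field is bounded, so $\xi\in L^2(\mu_t;\mathcal{F})$ because $\mu_t$ is a probability measure. It is therefore an admissible test field in the defining relation \eqref{eq:mix_vector_weak}. By Corollary~\ref{cor:existence_uniqueness}, $u_t$ exists, so for each $t$
\[
\int_{\mathcal{F}}\langle u_t,\nabla_g\psi\rangle\,d\mu_t=\int_{\mathcal{F}}\int_{\mathcal{F}}\langle u_t^f,\nabla_g\psi\rangle\,d\mu_t^f\,d\nu(f).
\]
Adding these two identities and integrating in $t$ gives
\[
I_1+I_2=\int_0^1\!\int_{\mathcal{F}}\Phi_f(t)\,d\nu(f)\,dt,
\]
where $\Phi_f(t)$ denotes the integrand of \eqref{eq:cond_ffm_continuity} integrated against $\mu_t^f$.

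The final step is to exchange $\int_0^1 dt$ and $\int d\nu(f)$. Once this is done, we get $I_1+I_2=\int_{\mathcal{F}}\big(\int_0^1\Phi_f(t)\,dt\big)d\nu(f)$. The inner integral vanishes for $\nu$-a.e.\ $f$ by \eqref{eq:cond_ffm_continuity}, so $I_1+I_2=0$.

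I expect this Fubini exchange to be the main obstacle, since it needs joint measurability and absolute integrability. For measurability, I will require $(t,f,g)\mapsto u_t^f(g)$ to be jointly measurable and $(t,f)\mapsto\mu_t^f$ to be a measurable kernel; these are the "conditions needed for Fubini/Tonelli" assumed in the theorem. For integrability, the $\partial_t\psi$ part is bounded by $\|\partial_t\psi\|_\infty$. The transport part is controlled by Cauchy--Schwarz:
\[
\int_0^1\!\int\!\int|\langle u_t^f,\nabla_g\psi\rangle|\,d\mu_t^f\,d\nu\,dt\le\|\nabla_g\psi\|_\infty\int_0^1\Big(\int\!\int\|u_t^f\|^2\,d\mu_t^f\,d\nu\Big)^{1/2}dt .
\]
This bound is finite provided \eqref{eq:cond_vector_L2} holds with a bound integrable in $t$; I will read \eqref{eq:cond_vector_L2} in that time-integrated sense.

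The same bound also ensures that $t\mapsto\int\langle u_t,\nabla\psi\rangle\,d\mu_t$ is measurable and integrable. This is needed for $I_2$ to be well defined in the first place, and it uses the $L^2$ bound $\|u_t\|_{L^2(\mu_t)}^2\le\int\!\int\|u_t^f\|^2\,d\mu_t^f\,d\nu$. That inequality follows by testing \eqref{eq:mix_vector_weak} with $\xi=u_t$ and applying Cauchy--Schwarz, which is the same estimate that underlies Corollary~\ref{cor:existence_uniqueness}.

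The boundary conditions $\mu_0=\mu_0$ and $\mu_1=\int\mu_1^f\,d\nu$ follow directly from \eqref{eq:mix_measure} at $t=0,1$.
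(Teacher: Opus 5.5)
Your proposal is correct and follows essentially the same route as the paper: the mixture identity for the $\partial_t\psi$ term, the defining relation \eqref{eq:mix_vector_weak} with $\xi=\nabla_g\psi(\cdot,t)$ for the transport term, and a Fubini exchange of $dt$ and $d\nu$. The paper runs this from the conditional equations toward the marginal one, while you run it in the reverse direction. Your explicit Cauchy--Schwarz bound, reading \eqref{eq:cond_vector_L2} as integrable in $t$, makes precise the Fubini justification that the paper only asserts; note that measurability of $t\mapsto\int\langle u_t,\nabla_g\psi\rangle\,d\mu_t$ comes from its equality with the jointly measurable conditional expression, not from the bound itself.
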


\subsection{Learning the Marginal Vector Field}\label{subsec:objective}

We are interested in the scenario where the conditional probability paths $\mu_t^f$ and conditional vector fields $u_t^f$
are known and have a simpler form that connects the source and target distributions, and we wish to recover the marginal vector field
$u_t$ that generates the mixture path $\mu_t$.
Directly computing $u_t(g)$ via \eqref{eq:mix_vector_pointwise} (or equivalently via a Radon--Nikodym derivative) is generally intractable.
Instead, we construct an unbiased stochastic objective for regressing a learned operator $u_\theta$ to $u_t$, generalizing the
finite-dimensional flow matching objective into an infinite functional dimension.

Let $u_\theta: [0,1] \times \mathcal{F} \to \mathcal{F}$ be a time-dependent vector field parametrized by a neural operator
(e.g., FNO) with weights $\theta$.
We define the ideal, albeit intractable, functional FM (FFM) objective with respect to the marginal measure $\mu_t$:
\begin{equation}
    \mathcal{L}_{\mathrm{FFM}}(\theta)
    := \mathbb{E}_{t \sim \mathcal{U}[0,1]} \int_{\mathcal{F}} \| u_\theta(t, g) - u_t(g) \|_{\mathcal{F}}^2 \, d\mu_t(g).
    \label{eq:loss_fm_infinite}
\end{equation}
Minimizing \eqref{eq:loss_fm_infinite} ensures that $u_\theta$ approximates the true marginal vector field $u_t$ in the
$L^2(\mu_t; \mathcal{F})$ norm. However, since $u_t$ is unknown, we cannot optimize \eqref{eq:loss_fm_infinite} directly.

To overcome this, we extend the conditional objective to an infinite-dimensional space, noted as functional conditional flow matching (FCFM), which relies only on the tractable conditional fields $u_t^f$:
\begin{equation}
    \mathcal{L}_{\mathrm{FCFM}}(\theta)
    := \mathbb{E}_{t \sim \mathcal{U}[0,1]} \int_{\mathcal{F}}
    \left( \int_{\mathcal{F}} \| u_\theta(t, g) - u_t^f(g) \|_{\mathcal{F}}^2 \, d\mu_t^f(g) \right) d\nu(f).
    \label{eq:loss_cfm_infinite}
\end{equation}
This objective is efficient to estimate stochastically by sampling $t \sim \mathcal{U}[0,1]$, data $f \sim \nu$, and points $g \sim \mu_t^f$
(e.g., $g = m_t^f + \sigma_t^f f_0$ under Gaussian conditional paths).

\begin{theorem}[Equivalence of FFM and FCFM objectives in $\mathcal{F}$]
\label{thm:objective_equivalence}
Assume \eqref{eq:cond_vector_L2} holds and, for a.e.\ $t\in[0,1]$, the model satisfies
$u_\theta(t,\cdot)\in L^2(\mu_t;\mathcal{F})$.
Then
\begin{equation}
\mathcal{L}_{\mathrm{FCFM}}(\theta)
=
\mathcal{L}_{\mathrm{FFM}}(\theta)
+\mathbb E_{t\sim\mathcal U[0,1]}
\Big[
C(t) - \int_{\mathcal F}\|u_t(g)\|_{\mathcal F}^2\,d\mu_t(g)
\Big],
\label{eq:L_equivalence}
\end{equation}
where
\begin{equation}
C(t):=\int_{\mathcal F}\int_{\mathcal F}\|u_t^f(g)\|_{\mathcal F}^2\,d\mu_t^f(g)\,d\nu(f),
\label{eq:C_t_def}
\end{equation}
which is finite by \eqref{eq:cond_vector_L2}. In particular, the difference between $\mathcal{L}_{\mathrm{FCFM}}(\theta)$ and
$\mathcal{L}_{\mathrm{FFM}}(\theta)$ is independent of $\theta$.
Consequently,
\begin{equation}
\nabla_\theta \mathcal{L}_{\mathrm{FCFM}}(\theta)=\nabla_\theta \mathcal{L}_{\mathrm{FFM}}(\theta),
\label{eq:grad_equivalence}
\end{equation}
under standard conditions that justify interchanging $\nabla_\theta$ and integration.
\end{theorem}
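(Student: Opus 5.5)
The plan is to expand both squared norms pointwise and compare them term by term, working at a fixed $t$ (outside a null set) and integrating over $t\sim\mathcal U[0,1]$ at the end. For fixed $t$, write
\[
\|u_\theta(t,g)-u_t^f(g)\|_{\mathcal F}^2=\|u_\theta(t,g)\|_{\mathcal F}^2-2\langle u_\theta(t,g),u_t^f(g)\rangle_{\mathcal F}+\|u_t^f(g)\|_{\mathcal F}^2 ,
\]
and integrate each term against $d\mu_t^f(g)\,d\nu(f)$. Before doing so, I will check that each term is separately integrable, so that splitting the integral is legitimate rather than an $\infty-\infty$ manipulation.

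\begin{itemize}
\item The third term integrates to $C(t)$, which is finite by \eqref{eq:cond_vector_L2}.
\item The first term is a nonnegative function of $g$ alone. By the mixture identity \eqref{eq:mix_integral_identity}, extended from bounded $h$ to nonnegative measurable $h$ by monotone convergence, it integrates to $\int\|u_\theta(t,g)\|^2\,d\mu_t(g)$. This is finite because $u_\theta(t,\cdot)\in L^2(\mu_t;\mathcal F)$.
\item The cross term is absolutely integrable by Cauchy--Schwarz in $L^2(\pi_t)$, using the two finite quantities above, with $\pi_t=\nu\otimes\mu_t^f$ as in the Remark.
\end{itemize}

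The key step is the cross term. Take $\xi:=u_\theta(t,\cdot)\in L^2(\mu_t;\mathcal F)$ as the test field in the defining relation \eqref{eq:mix_vector_weak}. This gives
\[
\int\!\!\int\langle u_\theta(t,g),u_t^f(g)\rangle\,d\mu_t^f\,d\nu=\int\langle u_\theta(t,g),u_t(g)\rangle\,d\mu_t(g),
\]
where $u_t$ is the unique element provided by Corollary~\ref{cor:existence_uniqueness}. Hence
\[
\int\!\!\int\|u_\theta-u_t^f\|^2\,d\mu_t^f\,d\nu=\int\big(\|u_\theta\|^2-2\langle u_\theta,u_t\rangle\big)\,d\mu_t+C(t).
\]
Adding and subtracting $\int\|u_t\|^2\,d\mu_t$, which is finite since $u_t\in L^2(\mu_t)$, gives
\[
\int\|u_\theta-u_t\|^2\,d\mu_t+C(t)-\int\|u_t\|^2\,d\mu_t .
\]
Taking $\mathbb E_t$ then yields \eqref{eq:L_equivalence}.

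The main obstacle is measurability and integrability in $t$. To pass to $\mathbb E_t$, the $t$-integrals of $C(t)$ and of $\int\|u_t\|^2\,d\mu_t$ need to be finite, or at least the identity must be read where the terms make sense. For this I will use that \eqref{eq:cond_vector_L2} is integrable in $t$, and the bound $\int\|u_t\|^2\,d\mu_t\le C(t)$. That bound is a Jensen/contraction property of conditional expectation. It follows from \eqref{eq:mix_vector_weak} with $\xi=u_t$ together with Cauchy--Schwarz:
\[
\|u_t\|_{L^2(\mu_t)}^2\le \|u_t\|_{L^2(\mu_t)}\,C(t)^{1/2}.
\]
Under these conditions the bracketed term is a finite constant independent of $\theta$. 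Differentiating both sides under the stated interchange conditions then gives \eqref{eq:grad_equivalence}.
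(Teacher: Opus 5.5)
Your proposal is correct and is essentially the paper's proof: fix $t$, expand the conditional loss as $T_1-2T_2+T_3$, identify $T_1$ through the mixture identity, the cross term through \eqref{eq:mix_vector_weak} with $\xi=u_\theta(t,\cdot)$, and $T_3=C(t)$, then add and subtract $\int\|u_t\|_{\mathcal F}^2\,d\mu_t$ and average over $t$. Your additional checks are sound refinements of steps the paper passes over:
\begin{itemize}
\item extending \eqref{eq:mix_integral_identity} from bounded to nonnegative $h$ by monotone convergence, since the paper applies it to the unbounded $h=\|u_\theta(t,\cdot)\|_{\mathcal F}^2$ without comment;
\item verifying each term is finite before splitting the integral;
\item the bound $\int\|u_t\|_{\mathcal F}^2\,d\mu_t\le C(t)$, which makes the bracket nonnegative.
\end{itemize}
Because the bracket is nonnegative, the $t$-averaged identity already holds in $[0,\infty]$. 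Integrability of $C$ in $t$ is an assumption beyond the theorem's hypotheses, and it is needed only to make the constant finite.
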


\subsection{Optimal Transport of Functional CFM}
\label{subsec:FOT-CFM}

Standard FFM typically assumes an \textit{independent coupling} between the source measure $\mu_0$ and the target measure $\nu$.
Mathematically, this implies that the joint distribution is simply the product measure $\pi_0 = \mu_0 \otimes \nu$.
While valid for generating the correct marginal distribution, this independent coupling leads to stochastic trajectories that frequently intersect, resulting in a marginal vector field with high curvature and complexity.
Numerically, integrating such a curved vector field requires small step sizes (which will result in a high number of function evaluations) to limit discretization error.

So in this section, we use OT to enforce deterministic, straight-line probability paths by approximating the 2-Wasserstein optimal coupling.
Our method consists of two steps: (1) solving the static optimal transport problem within a mini-batch to align source and target samples, and (2) constructing the displacement interpolation (geodesic paths) based on this alignment.

\paragraph{Mini-batch Optimal Transport Coupling}
Since solving the global optimal transport problem over the entire infinite-dimensional dataset is computationally intractable, we adopt a stochastic approximation using mini-batches.
Consider a mini-batch of source samples $\mathcal{B}_0 = \{f_0^{(i)}\}_{i=1}^B \sim \mu_0$ and target samples $\mathcal{B}_1 = \{f_1^{(j)}\}_{j=1}^B \sim \nu$, where $B$ is the batch size.
Let $S_B$ denote the set of all permutations of the indices $\{1, \dots, B\}$.
We aim to find an optimal permutation $\sigma^* \in S_B$ that minimizes the total transport cost within the batch.
Here, each $\sigma \in S_B$ represents a bijective mapping which assigns the $i$-th source sample to the $\sigma(i)$-th target sample.
The optimization problem is given by:
\begin{equation}
    \sigma^* = \mathop{\arg\min}_{\sigma \in S_B} \sum_{i=1}^B \| f_0^{(i)} - f_1^{(\sigma(i))} \|_{\mathcal{F}}^2.
    \label{eq:minibatch_ot}
\end{equation}
This is a linear assignment problem, which we solve exactly using the Hungarian algorithm (or linear sum assignment) with a complexity of $\mathcal{O}(B^3)$.
To formalize the stochastic approximation induced by Eq. \ref{eq:minibatch_ot}, define the empirical source and target measures
\[
\hat{\mu}_0^B := \frac{1}{B}\sum_{i=1}^B \delta_{f_0^{(i)}},
\qquad
\hat{\nu}^B := \frac{1}{B}\sum_{j=1}^B \delta_{f_1^{(j)}}.
\]
Then Eq. \ref{eq:minibatch_ot} is precisely the quadratic optimal transport problem between the empirical measures \(\hat{\mu}_0^B\) and \(\hat{\nu}^B\). The following result shows that the mini-batch OT coupling used in FOT-CFM is a statistically consistent approximation of the population OT problem in the separable Hilbert space $\mathcal{F}$.

\begin{theorem}[Consistency of mini-batch OT in $\mathcal{F}$]
\label{thm:minibatch_ot_consistency}
Assume $(\mathcal{F},\langle\cdot,\cdot\rangle_F)$ is a separable Hilbert space and
$\mu_0,\nu\in\mathcal P_2(\mathcal{F})$. For each batch size \(B\), let $f_0^{(1)},\dots,f_0^{(B)} \overset{\mathrm{i.i.d.}}{\sim} \mu_0$ and $f_1^{(1)},\dots,f_1^{(B)} \overset{\mathrm{i.i.d.}}{\sim} \nu$, and define the empirical measures 
\[
\hat{\mu}_0^B:=\frac{1}{B}\sum_{i=1}^B\delta_{f_0^{(i)}},
\qquad
\hat{\nu}^B:=\frac{1}{B}\sum_{j=1}^B\delta_{f_1^{(j)}}.
\]
Let $\hat{\pi}_B\in\Pi(\hat{\mu}_0^B,\hat{\nu}^B)$ be an optimal coupling for the quadratic cost
\[
\int_{\mathcal{F}\times \mathcal{F}}\|x-y\|_\mathcal{F}^2\,d\pi(x,y),
\]
and define the interpolation map
\[
T_t(x,y):=(1-t)x+ty,
\qquad t\in[0,1].
\]
Then
\begin{equation}
W_2(\hat{\mu}_0^B,\mu_0)\to 0,
\qquad
W_2(\hat{\nu}^B,\nu)\to 0,
\label{eq:empirical_measure_convergence}
\end{equation}
almost surely as $B\to\infty$, and every weak limit point $\bar{\pi}$ of $\{\hat{\pi}_B\}_{B\ge1}$ satisfies
\begin{equation}
\bar{\pi}\in\Pi(\mu_0,\nu),
\qquad
\int_{\mathcal{F}\times \mathcal{F}}\|x-y\|_\mathcal{F}^2\,d\bar{\pi}(x,y)=W_2^2(\mu_0,\nu).
\label{eq:empirical_plan_optimality}
\end{equation}
That is, every subsequential limit of the mini-batch OT couplings is an optimal coupling of the population OT problem. In particular, if the population quadratic OT problem admits a unique optimal coupling $\pi^\ast$, then
\begin{equation}
\hat{\pi}_B \rightharpoonup \pi^\ast,
\qquad
(T_t)_\#\hat{\pi}_B \rightharpoonup (T_t)_\#\pi^\ast,
\qquad \forall\, t\in[0,1],
\label{eq:minibatch_ot_geodesic_convergence}
\end{equation}
almost surely, where $(T_t)_\#\pi^\ast$ is the population displacement interpolation. Consequently, the straight-line paths induced by mini-batch OT provide statistically consistent approximations of the global Wasserstein geodesic.

Moreover, in the equal-weight empirical case, an optimal empirical coupling may be chosen in the form
\begin{equation}
\hat{\pi}_B = \frac{1}{B}\sum_{i=1}^B
\delta_{\bigl(f_0^{(i)},\,f_1^{(\sigma_B(i))}\bigr)},
\label{eq:empirical_permutation_coupling}
\end{equation}
where $\sigma_B\in S_B$ is a minimizer of the mini-batch assignment problem in Eq. \ref{eq:minibatch_ot}.
\end{theorem}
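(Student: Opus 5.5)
The proof splits into four steps. First, almost-sure convergence of the empirical measures in $W_2$. Second, stability of optimal couplings, to identify every weak limit point as optimal. Third, the uniqueness case, transported through the continuous interpolation map. Fourth, a Birkhoff--von Neumann argument for the permutation form.

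\textbf{Step 1: convergence of the empirical measures.} I would use the standard characterization of $W_2$-convergence on a Polish space: $W_2(\mu_n,\mu)\to 0$ iff $\mu_n\rightharpoonup\mu$ weakly and $\int\|x\|_{\mathcal F}^2\,d\mu_n\to\int\|x\|_{\mathcal F}^2\,d\mu$ (Villani, Thm.~6.9).

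Since $\mathcal F$ is separable, Varadarajan's theorem gives $\hat\mu_0^B\rightharpoonup\mu_0$ almost surely. The strong law of large numbers, applied to the integrable variables $\|f_0^{(i)}\|_{\mathcal F}^2$ (integrable because $\mu_0\in\mathcal P_2(\mathcal F)$), gives almost-sure convergence of the second moments. The same argument applies to $\hat\nu^B$, and this yields \eqref{eq:empirical_measure_convergence}. I would fix one full-measure event on which both convergences hold and argue pathwise from then on.

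\textbf{Step 2: limit points are optimal couplings.} This is the step I expect to be the main obstacle, because weak compactness in infinite dimensions needs care.

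\emph{Tightness.} The marginals $\{\hat\mu_0^B\}$ and $\{\hat\nu^B\}$ are weakly convergent sequences on a Polish space, so by Prokhorov they are tight. Hence $\{\hat\pi_B\}$ is tight on $\mathcal F\times\mathcal F$, and weak limit points exist.

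\emph{Marginals.} Let $\hat\pi_{B_k}\rightharpoonup\bar\pi$. Projections are continuous, so the marginals of $\bar\pi$ are $\mu_0$ and $\nu$, i.e.\ $\bar\pi\in\Pi(\mu_0,\nu)$.

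\emph{Optimality.} The cost $c(x,y)=\|x-y\|_{\mathcal F}^2$ is continuous and nonnegative, so the Portmanteau theorem (lower semicontinuity) gives
\[
\int c\,d\bar\pi\le\liminf_k\int c\,d\hat\pi_{B_k}=\liminf_k W_2^2(\hat\mu_0^{B_k},\hat\nu^{B_k}).
\]
By the triangle inequality for $W_2$ and Step 1, $W_2(\hat\mu_0^B,\hat\nu^B)\to W_2(\mu_0,\nu)$. Therefore $\int c\,d\bar\pi\le W_2^2(\mu_0,\nu)$, and the reverse inequality holds because $\bar\pi\in\Pi(\mu_0,\nu)$. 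This proves \eqref{eq:empirical_plan_optimality}.

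Only lower semicontinuity is needed here, so uniform integrability of the cost can be avoided. The existence of an optimal $\hat\pi_B$ is trivial in the discrete setting.

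\textbf{Step 3: the uniqueness case.} Suppose $\pi^\ast$ is the unique optimal coupling. Tightness gives relative compactness of $\{\hat\pi_B\}$, and every limit point equals $\pi^\ast$ by Step 2. A sequence in which every subsequence has a further subsequence converging to the same limit $\pi^\ast$ converges to $\pi^\ast$, so $\hat\pi_B\rightharpoonup\pi^\ast$ almost surely.

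The map $T_t$ is continuous on $\mathcal F\times\mathcal F$, so the continuous mapping theorem gives $(T_t)_\#\hat\pi_B\rightharpoonup(T_t)_\#\pi^\ast$ for every $t\in[0,1]$, which is \eqref{eq:minibatch_ot_geodesic_convergence}. If desired, this can be upgraded to $W_2$-convergence using the second-moment bound $\|T_t(x,y)\|^2\le 2\|x\|^2+2\|y\|^2$ together with the moment convergence from Step 1.

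\textbf{Step 4: the permutation form.} With equal weights $1/B$ and possibly repeated atoms, any coupling of $\hat\mu_0^B$ and $\hat\nu^B$ can be represented by $\frac1B P$ for a doubly stochastic matrix $P$ acting on the sample indices. The transport cost is linear in $P$, so its minimum over the Birkhoff polytope is attained at an extreme point. By Birkhoff--von Neumann, the extreme points are permutation matrices. Such a minimizer is exactly a $\sigma_B$ solving \eqref{eq:minibatch_ot}, which gives \eqref{eq:empirical_permutation_coupling}.
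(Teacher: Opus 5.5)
Your proposal is correct and follows essentially the same route as the paper's proof. It uses almost-sure weak convergence plus the strong law for second moments to get the $W_2$ limits, tightness of $\{\hat{\pi}_B\}$ inherited from the marginals via Prokhorov, the Portmanteau lower-semicontinuity bound against $W_2^2(\hat{\mu}_0^B,\hat{\nu}^B)\to W_2^2(\mu_0,\nu)$ for optimality, the subsequence argument with continuity of $T_t$ for the uniqueness case, and Birkhoff--von Neumann for the permutation form; the only differences (step order, naming Varadarajan's theorem, and your optional remark on upgrading the interpolation convergence to $W_2$) are cosmetic.
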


\paragraph{Constructing Paths $($Dynamic OT $)$}
Once the optimal pairs $(f_0^{(i)}, f_1^{(\sigma(i))})$ are established, we construct the conditional probability paths to follow the Wasserstein geodesics.
According to the theory of dynamic optimal transport (see Eq.~\eqref{eq:dynamic_ot}), the path minimizing the kinetic energy for the quadratic cost is the displacement interpolation:
\begin{equation}
    f_t^{(i)} = (1-t)f_0^{(i)} + t f_1^{(\sigma(i))}.
    \label{eq:geodesic_path}
\end{equation}
The corresponding conditional vector field $u_t^{(i)}(\cdot)$ is a constant velocity field pointing from source to target:
\begin{equation}
    u_t^{(i)}(f_t^{(i)}) = f_1^{(\sigma(i))} - f_0^{(i)}.
    \label{eq:ot_vector_field}
\end{equation}
Unlike the Variance Preserving (VP) paths used in diffusion models which follow curved trajectories, Eq.~\eqref{eq:geodesic_path} describes a strictly straight trajectory in the Hilbert space $\mathcal{F}$ with constant speed.
Crucially, because the OT coupling minimizes the total distance $\sum \|f_1^{(\sigma(i))} - f_0^{(i)}\|^2$, the resulting straight paths tend to be better aligned and empirically exhibit reduced curvature / fewer crossings.

\paragraph{FOT-CFM Training Objective}
By substituting the OT-aligned pairs and the geodesic vector field into the general CFM objective (Eq.~\eqref{eq:loss_cfm_infinite}), we obtain the specific loss function for FOT-CFM:
\begin{equation}
    \mathcal{L}_{\mathrm{FOT-CFM}}(\theta) = \mathbb{E}_{t, \mathcal{B}_0, \mathcal{B}_1} \left[ \frac{1}{B} \sum_{i=1}^B \| u_\theta(t, f_t^{(i)}) - (f_1^{(\sigma(i))} - f_0^{(i)}) \|_{\mathcal{F}}^2 \right],
\end{equation}
where $t \sim \mathcal{U}[0,1]$ and $f_t^{(i)}$ is the interpolated sample.
By learning to regress this OT-guided geodesic vector field, $u_\theta$ approximates the velocity field associated with the OT-aligned displacement interpolation. In view of Theorem \ref{thm:minibatch_ot_consistency}, this mini-batch construction is a consistent approximation of the corresponding OT geometry. During inference, this results in significantly straighter flow trajectories, allowing the ODE solver to traverse from noise to data with large steps while maintaining high generation fidelity.

\subsection{Algorithm}
\label{subsec:algorithm}

Since white noise is undefined in infinite-dimensional Hilbert spaces \cite{zhang2024functional}, FOT-CFM initializes the generative process using functions sampled from a well-defined reference Gaussian measure $\mu_0$ (e.g., a Gaussian Random Field with a specified covariance kernel).
The vector field $u_\theta$ is parameterized by a resolution-invariant Neural Operator (e.g., FNO), which takes the time coordinate $t$ and function state $f_t$ as inputs.
Based on the theoretical framework established in Section~\ref{subsec:FOT-CFM}, we detail the training procedure with Mini-batch Optimal Transport in Algorithm~\ref{alg:training} and the simulation-free sampling procedure in Algorithm~\ref{alg:sampling}.

\begin{algorithm}[H]
\caption{FOT-CFM Training with Mini-batch Optimal Transport}
\label{alg:training}
\begin{algorithmic}[1]
\Require Dataset $\mathcal{D} \sim \nu$, Batch size $B$, Gaussian measure $\mu_0$, Neural Operator $u_\theta$.
\State Initialize model parameters $\theta$.
\While{not converged}
    \State \textbf{1. Sample Batch:}
    \State Sample data batch $\mathcal{B}_1 = \{f_1^{(i)}\}_{i=1}^B \sim \mathcal{D}$.
    \State Sample noise batch $\mathcal{B}_0 = \{f_0^{(i)}\}_{i=1}^B \sim \mu_0$.

    \State \textbf{2. Optimal Transport Coupling:}
    \State Compute pairwise cost matrix $M \in \mathbb{R}^{B \times B}$ where $M_{ij} = \| f_0^{(i)} - f_1^{(j)} \|_{\mathcal{F}}^2$.
    \State Solve assignment problem: $\sigma^* = \mathop{\arg\min}_{\sigma \in S_B} \sum_{i} M_{i, \sigma(i)}$.
    \State Reorder data batch: $f_{1}^{(i)} \leftarrow f_{1}^{(\sigma^*(i))}$.

    \State \textbf{3. Construct Paths:}
    \State Sample time $t \sim \mathcal{U}[0, 1]$.
    \State Interpolate state: $f_t^{(i)} = (1-t)f_0^{(i)} + t f_1^{(i)}$.
    \State Compute target velocity: $v^{(i)} = f_1^{(i)} - f_0^{(i)}$.

    \State \textbf{4. Optimization Step:}
    \State Predict velocity: $\hat{v}^{(i)} = u_\theta(t, f_t^{(i)})$.
    \State Compute Loss: $\mathcal{L} = \frac{1}{B} \sum_{i=1}^B \| \hat{v}^{(i)} - v^{(i)} \|_{\mathcal{F}}^2$.
    \State Update $\theta$ using gradient descent $\nabla_\theta \mathcal{L}$.
\EndWhile
\end{algorithmic}
\end{algorithm}

\begin{algorithm}[H]
\caption{FOT-CFM Inference (Sampling)}
\label{alg:sampling}
\begin{algorithmic}[1]
\Require Trained Neural Operator $u_\theta$, Gaussian measure $\mu_0$, Number of ODE steps $N$ (or ODE solver tolerance).
\State \textbf{1. Initialization:}
\State Sample initial noise $f_0 \sim \mu_0$.
\State Define the ODE: $\frac{d f_t}{d t} = u_\theta(t, f_t)$.

\State \textbf{2. Numerical Integration (e.g., Euler / RK4):}
\State Set time grid $t_0=0, t_1=1/N, \dots, t_N=1$.
\For{$k = 0$ to $N-1$}
    \State \textit{// Euler Step}
    \State $v_k = u_\theta(t_k, f_{t_k})$
    \State $f_{t_{k+1}} = f_{t_k} + (t_{k+1} - t_k) \cdot v_k$
\EndFor

\State \textbf{Return} Generated function sample $f_1$.
\end{algorithmic}
\end{algorithm}

\section{Experiments and Results}\label{sec4}
To evaluate the effectiveness of our framework, we conduct experiments on three representative chaotic dynamical systems that exhibit rich multi-scale turbulent structures: the Navier-Stokes equations, Kolmogorov Flow, and the Hasegawa-Wakatani equations for complex plasma systems. These benchmarks, encompassing both widely-used public datasets \cite{kerrigan2022diffusion, li2022learning, rahman2022generative} and a more sophisticated plasma physics case \cite{castagna2024stylegan, greif2023physics}, provide a comprehensive testbed for our approach.
For all tasks, we adopt the Fourier Neural Operator (FNO) \cite{li2020fourier} as the backbone (see Appendix B for details) to model the velocity, which takes functions as both inputs and outputs; the models are then trained with Algorithm \ref{alg:training}.

\subsection{Evaluation Metrics}
\label{subsec:metrics}

To comprehensively evaluate the performance of FOT-CFM in generating high-fidelity functional data and its computational efficiency, we employ a suite of metrics covering physical consistency, distributional similarity, and inference speed.

\paragraph{1. Spectral Consistency Metrics}
In turbulence modeling, capturing the correct energy cascade across scales is fundamental. We evaluate spectral fidelity through two complementary approaches:

Radial Spectrum (RS). The radial energy spectrum $E(k)$ quantifies the energy distribution over wavenumber magnitudes $k = ||\mathbf{k}||$. For a function $f$, it is computed via the Fourier transform $\hat{f}$ by integrating over concentric shells. To assess the reconstruction of turbulent fluctuations, we calculate the Coefficient of Determination ($R^2$) and the Root Mean Squared Error (RMSE) between the logarithms of the generated and reference spectra ($\log E_{gen}(k)$ vs. $\log E_{ref}(k)$). Note that the zero-frequency mode ($k=0$) is excluded to focus on the inertial subrange and fine-scale structures.

Directional Spectrum (DS). To verify that the model captures directional flow structures (e.g., in Kolmogorov flow), we further compute the directional energy spectrum $E(k_x)$ and $E(k_y)$ by integrating the 2D spectrum along the $k_y$ and $k_x$ axes, respectively. We report the log-scale $R^2$ and RMSE for both $x$ and $y$ components. High $R^2$ and low RMSE in these metrics indicate that the generated fields preserve the correct physical anisotropy and lack spectral bias.

\paragraph{2. Density Consistency Metrics}
To assess the alignment of marginal value distributions between the real and generated ensembles, we evaluate the statistical fidelity of the physical quantities (e.g., velocity magnitudes). We flatten the high-dimensional function fields into scalar collections and estimate their continuous probability density functions (PDFs) using Gaussian Kernel Density Estimation (KDE). We then compare the estimated densities of the generated data against the ground truth by reporting:
\begin{itemize}
    \item Density RMSE: The Root Mean Squared Error between the PDFs, quantifying the absolute deviation in probability magnitudes.
    \item Density $R^2$: The Coefficient of Determination, measuring how well the shape of the generated distribution matches the reference.
\end{itemize}
High $R^2$ and low RMSE indicate that FOT-CFM accurately reproduces the global statistical properties and physical value ranges of the target system.

\paragraph{3. Computational Efficiency}
A core contribution of FOT-CFM is the linearization of generative paths via optimal transport. To quantify this, we report the number of function evaluations required by the ODE solver (e.g., dopri5, 4th order Runge–Kutta or Euler) to achieve a target error tolerance or visual quality. Lower NFE indicates straighter trajectories and higher efficiency.

\subsection{Kolmogorov Flow}
\label{subsec:Kolmogorov}

We evaluate the performance of FOT-CFM on the 2D Kolmogorov Flow, a classical benchmark for chaotic fluid dynamics governed by the incompressible Navier-Stokes equations with sinusoidal forcing. The system is defined on a torus $\mathbb{T}^2 = [0, 2\pi]^2$, following the dynamics:
\begin{equation}
    \partial_t u = -u \cdot \nabla u - \nabla p + \frac{1}{Re} \Delta u + sin(ny) \hat{x}, \quad \nabla \cdot u = 0,
\end{equation}
where $u$ is the velocity field, $p$ is the pressure, $Re>0$ is the Reynolds number.
We utilize the publicly available dataset provided by Li et al. \cite{li2022learning}, which consists of high-fidelity simulation snapshots. The data is discretized on a spatial grid of resolution $64 \times 64$. The goal is to learn the invariant measure (distribution) of the chaotic attractor from the training snapshots and generate new, physically consistent flow states.

We compare FOT-CFM against several state-of-the-art functional generative models: the Denoising Diffusion Operator (DDO) \cite{ddo}, Functional Flow Matching (FFM) \cite{ffm}, functional Denoising Diffusion Probabilistic Model (DDPM) \cite{kerrigan2022diffusion}, and Generative Adversarial Neural Operators (GANO) \cite{rahman2022generative}. We do not compare to non-functional methods, as we are primarily interested in developing discretization-invariant generative models. All noise was specified via a Gaussian process with a tuned Matérn kernel. For the sake of a fair comparison, we used the same architecture for all models, with the
exception of GANO which requires a generator and discriminator pair. For all models, we performed extensive hyperparameter tuning and report the best results.

\begin{table*}[htbp]
\centering
\caption{Comparison on Kolmogorov Flow. We evaluate physical fidelity using spectral metrics (radial and directional) and density consistency metrics, alongside computational efficiency (NFE). \textbf{Bold} indicates the best performance.}
\label{tab:kolmogorov_metrics}
\renewcommand{\arraystretch}{1.0}
\resizebox{0.9\textwidth}{!}{
\begin{tabular}{c|c|c|c|c|c|c|c}
\toprule
\multicolumn{3}{c|}{Metrics} & DDPM & FFM & DDO & GANO & FOT-CFM \\
\midrule
\multirow{9}{*}{NFE=5} & \multirow{2}{*}{KDE} & $R^2$ & 0.9897 & 0.9975 & 0.8833 & 0.8799 & \textbf{0.9982} \\
 & & RMSE & 0.0027 & 0.0013 & 0.0090 & 0.0092 & \textbf{0.0011} \\
 \cmidrule(l){2-8}
 & \multirow{2}{*}{RS} & $R^2$ & 0.3941 & 0.9946 & 0.5552 & 0.8008 & \textbf{0.9953}  \\
 & & RMSE & 1.0088 & 0.0949 & 0.8643 & 0.5784 & \textbf{0.0892}  \\
 \cmidrule(l){2-8}
 & \multirow{2}{*}{DS($kx$)} & $R^2$ & 0.0508 & 0.9913 &	0.2712 & 0.7023 & \textbf{0.9919}  \\
 & & RMSE & 1.0448 & 0.1000 & 0.9155 & 0.5851 & \textbf{0.0967}  \\
 \cmidrule(l){2-8}
 & \multirow{2}{*}{DS($ky$)} & $R^2$ & 0.0697 & 0.9871 & 0.2902 & 0.6660 & \textbf{0.9883} \\
 & & RMSE & 1.0191 & 0.1199 & 0.8901 & 0.6106 & \textbf{0.1145}  \\
\midrule
\multirow{9}{*}{NFE=10} & \multirow{2}{*}{KDE} & $R^2$ & 0.9779 & 0.9974 & 0.9837 & 0.8799 & \textbf{0.9982}  \\
 & & RMSE & 0.0039 & 0.0014 & 0.0034 & 0.0092 & \textbf{0.0011}  \\
 \cmidrule(l){2-8}
 & \multirow{2}{*}{RS} & $R^2$ & 0.5536 & 0.9947 & 0.9302 & 0.8008 & \textbf{0.9953}  \\
 & & RMSE & 0.8659 & 0.0940 & 0.3424 & 0.5784 & \textbf{0.0892} \\
 \cmidrule(l){2-8}
 & \multirow{2}{*}{DS($kx$)} & $R^2$ & 0.3006 & 0.9914 & 0.8792 & 0.7023 & \textbf{0.9919}  \\
 & & RMSE & 0.8968 & 0.0992 & 0.3727 & 0.5851 & \textbf{0.0965} \\
 \cmidrule(l){2-8}
 & \multirow{2}{*}{DS($ky$)} & $R^2$ & 0.3198 & 0.9876 & 0.8921 & 0.6660 & \textbf{0.9885}  \\
 & & RMSE & 0.8714 & 0.1178 & 0.3471 & 0.6106 & \textbf{0.1133}  \\
\midrule
\multirow{9}{*}{NFE=20} & \multirow{2}{*}{KDE} & $R^2$ & 0.8898 & 0.9974 & 0.9973 & 0.8799 & \textbf{0.9985}  \\
 & & RMSE & 0.0088 & 0.0013 & 0.0014 & 0.0092 & \textbf{0.0017}  \\
 \cmidrule(l){2-8}
 & \multirow{2}{*}{RS} & $R^2$ & 0.7204 & 0.9948 & 0.9848 & 0.8008 & \textbf{0.9953} \\
 & & RMSE & 0.6853 & 0.0938 & 0.1599 & 0.5784 & \textbf{0.0890} \\
 \cmidrule(l){2-8}
 & \multirow{2}{*}{DS($kx$)} & $R^2$ & 0.5633 & 0.9915 & 0.9711 & 0.7023 & \textbf{0.9919} \\
 & & RMSE & 0.7087 & 0.0991 & 0.1823 & 0.5851 & \textbf{0.0964} \\
 \cmidrule(l){2-8}
 & \multirow{2}{*}{DS($ky$)} & $R^2$ & 0.5800 & 0.9876 & 0.9776 & 0.6660 & \textbf{0.9885} \\
 & & RMSE & 0.6847 & 0.1176 & 0.1582 & 0.6106 & \textbf{0.1131} \\
\midrule
\multirow{9}{*}{NFE=100} & \multirow{2}{*}{KDE} & $R^2$ & 0.7459 & 0.9974 & \textbf{0.9995} & 0.8799 & 0.9987 \\
 & & RMSE & 0.0133 & 0.0013 & \textbf{0.0006} & 0.0092 & 0.0019 \\
 \cmidrule(l){2-8}
 & \multirow{2}{*}{RS} & $R^2$ & 0.9020 & 0.9948 & \textbf{0.9971} & 0.8008 & 0.9963   \\
 & & RMSE & 0.4057 & 0.0938 & \textbf{0.0702} & 0.5784 & 0.0894  \\
 \cmidrule(l){2-8}
 & \multirow{2}{*}{DS($kx$)} & $R^2$ & 0.8718 & 0.9915 & \textbf{0.9931} & 0.7023 & 0.9921 \\
 & & RMSE & 0.3840 & 0.0991 & \textbf{0.0893} & 0.5851 & 0.0924  \\
 \cmidrule(l){2-8}
 & \multirow{2}{*}{DS($ky$)} & $R^2$ & 0.8667 & 0.9876 & \textbf{0.9931} & 0.6660 & 0.9896 \\
 & & RMSE & 0.3857 & 0.1176 & \textbf{0.0880} & 0.6106 & 0.1012 \\
\bottomrule
\end{tabular}
}
\end{table*}

As summarized in Table \ref{tab:kolmogorov_metrics} and Fig. \ref{fig:kolmogorov}, FOT-CFM achieves the best overall spectral and statistical consistency under low inference budgets (NFE=5–20), while remaining competitive at higher NFE. For the isotropic energy spectrum, FOT-CFM attains the highest $R^2$ and the lowest RMSE at NFE=5, indicating that it captures the correct distribution of energy across spatial scales. Moreover, the directional spectrums ($k_x$ and $k_y$) show close agreement with the reference, suggesting that the anisotropy induced by the sinusoidal forcing is well preserved; in contrast, several baselines exhibit noticeable high-wavenumber deviations, shown as Fig. \ref{fig:kolmogorov}. The KDE metric further confirms that the generated vorticity values follow the reference statistics, reducing non-physical generations. Due to the global optimal transport coupling, FOT-CFM learns straighter generative trajectories. As a result, it achieves this fidelity with fewer function evaluations.

\begin{figure}[h]
\centering
\includegraphics[width=1\textwidth]{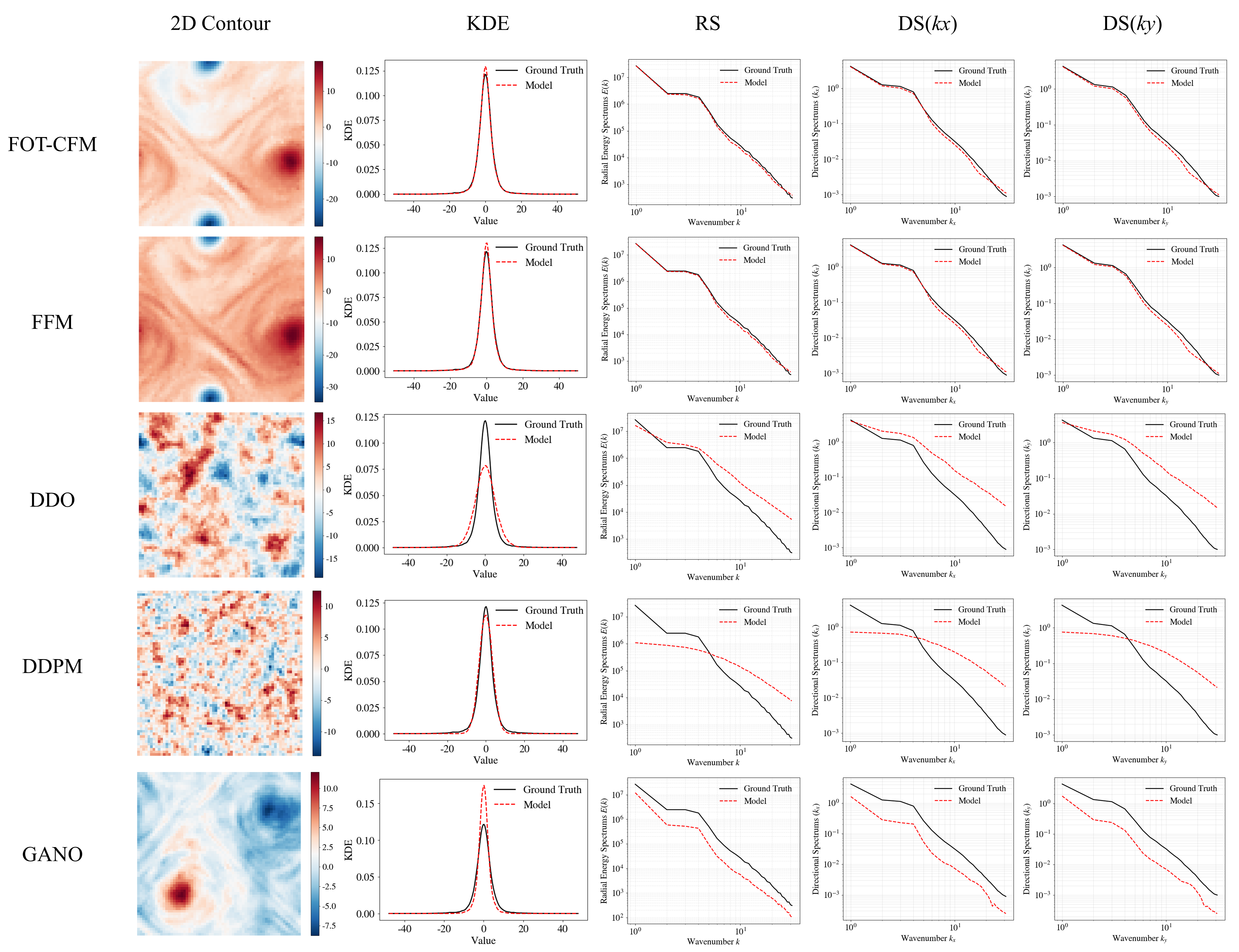}
\caption{Comparison of generative models on the 2D Kolmogorov Flow. Each row presents the results of a specific model (FOT-CFM, FFM, DDO, DDPM, and GANO) at a fixed inference budget (NFE=5).}\label{fig:kolmogorov}
\end{figure}

\subsection{Navier-Stokes Equations}
\label{subsec:NS}

To further validate the scalability and robustness of FOT-CFM, we consider the 2D incompressible Navier-Stokes equations. Unlike the forced Kolmogorov flow, this experiment focuses on the model's ability to represent the evolution of multi-scale vortices without continuous energy injection. The governing equations are formulated in terms of vorticity $\omega = \nabla \times \mathbf{u}$:
\begin{equation}
    \partial_t \omega + \mathbf{u} \cdot \nabla \omega = \nu \Delta \omega, \quad \nabla \cdot \mathbf{u} = 0,
\end{equation}
where $\nu$ is the kinematic viscosity. We use the dataset provided by Li et al. \cite{li2020fourier}, consisting of trajectory snapshots. The spatial resolution is $64 \times 64$, and we aim to generate diverse, physically valid flow states that conform to the target distribution of the turbulent attractor.

We maintain consistency with the previous experiment by comparing FOT-CFM against DDPM, FFM, DDO, and GANO. Evaluation is performed across three dimensions: Density RMSE and $R^2$ via Gaussian KDE, radial spectrum and directional ($k_x, k_y$) spectrums, number of function evaluations required for valid generations.

\begin{table*}[htbp]
\centering
\caption{Comparison on Navier-Stokes Equations. We evaluate physical fidelity using spectral metrics (radial and directional) and density consistency metrics, alongside computational efficiency (NFE). \textbf{Bold} indicates the best performance.}
\label{tab:ns_metrics}
\renewcommand{\arraystretch}{1.0}
\resizebox{0.9\textwidth}{!}{
\begin{tabular}{c|c|c|c|c|c|c|c}
\toprule
\multicolumn{3}{c|}{Metrics} & DDPM & FFM & DDO & GANO & FOT-CFM \\
\midrule
\multirow{9}{*}{NFE=5} & \multirow{2}{*}{KDE} & $R^2$ & 0.8848 & 0.9892 & 0.9412 & 0.9593 & \textbf{0.9949} \\
 & & RMSE & 0.0283 & 0.0087 & 0.0201 & 0.0168 & \textbf{0.0059} \\
 \cmidrule(l){2-8}
 & \multirow{2}{*}{RS} & $R^2$ & 0.1637 & 0.9536 & 0.4474 & 0.9149 & \textbf{0.9767} \\
 & & RMSE & 2.0988 & 0.2817 & 1.2896 & 0.5062 & \textbf{0.2649}  \\
 \cmidrule(l){2-8}
 & \multirow{2}{*}{DS($kx$)} & $R^2$ & 0.0464 & 0.8326 & 0.1047 & 0.6609 & \textbf{0.8910}  \\
 & & RMSE & 2.4661 & 0.5906 & 1.6312 & 1.0039 & \textbf{0.5692} \\
 \cmidrule(l){2-8}
 & \multirow{2}{*}{DS($ky$)} & $R^2$ & 0.0985 & 0.9129 & 0.1813 & 0.7195 & \textbf{0.9294}  \\
 & & RMSE & 2.3413 & 0.4904 & 1.5036 & 0.8802 & \textbf{0.4218} \\
\midrule
\multirow{9}{*}{NFE=10} & \multirow{2}{*}{KDE} & $R^2$ & 0.6965 & 0.9860 & 0.9516 & 0.9593 & \textbf{0.9891}  \\
 & & RMSE & 0.0460 & 0.0084 & 0.0184 & 0.0168 & \textbf{0.0067}  \\
 \cmidrule(l){2-8}
 & \multirow{2}{*}{RS} & $R^2$ & 0.2333 & 0.9797 & 0.9004 & 0.9149 & \textbf{0.9964}  \\
 & & RMSE & 1.9265 & 0.2472 & 0.5476 & 0.5062 & \textbf{0.1040}  \\
 \cmidrule(l){2-8}
 & \multirow{2}{*}{DS($kx$)} & $R^2$ & 0.1756 & 0.9024 & 0.7393 & 0.6609 & \textbf{0.9715}  \\
 & & RMSE & 2.2972 & 0.5386 & 0.8802 & 1.0039 & \textbf{0.2911}  \\
 \cmidrule(l){2-8}
 & \multirow{2}{*}{DS($ky$)} & $R^2$ & 0.1093 & 0.9273 & 0.7897 & 0.7195 & \textbf{0.9886}  \\
 & & RMSE & 2.1726 & 0.4482 & 0.7620 & 0.8802 & \textbf{0.1773}  \\
\midrule
\multirow{9}{*}{NFE=20} & \multirow{2}{*}{KDE} & $R^2$ & 0.4179 & \textbf{0.9941} & 0.9419 & 0.9593 & 0.9892   \\
 & & RMSE & 0.0636 & \textbf{0.0064} & 0.0201 & 0.0168 & 0.0086  \\
 \cmidrule(l){2-8}
 & \multirow{2}{*}{RS} & $R^2$ & 0.5747 & 0.9798 & 0.9611 & 0.9149 & \textbf{0.9829} \\
 & & RMSE & 1.6687 & 0.2464 & 0.3423 & 0.5062 & \textbf{0.2271}  \\
 \cmidrule(l){2-8}
 & \multirow{2}{*}{DS($kx$)} & $R^2$ & 0.4036 & 0.9028 & 0.8525 & 0.6609 & \textbf{0.9827}  \\
 & & RMSE & 2.0424 & 0.5374 & 0.6621 & 1.0039 & \textbf{0.3094} \\
 \cmidrule(l){2-8}
 & \multirow{2}{*}{DS($ky$)} & $R^2$ & 0.3333 & 0.9276 & 0.8871 & 0.7195 & \textbf{0.9752}  \\
 & & RMSE & 1.9189 & 0.4473 & 0.5584 & 0.8802 & \textbf{0.3232} \\
\midrule
\multirow{9}{*}{NFE=100} & \multirow{2}{*}{KDE} & $R^2$ & 0.7390 & \textbf{0.9943} & 0.9546 & 0.9593 & 0.9900 \\
 & & RMSE & 0.0426 & \textbf{0.0064} & 0.0178 & 0.0168 & 0.0083  \\
 \cmidrule(l){2-8}
 & \multirow{2}{*}{RS} & $R^2$ & 0.7890 & 0.9799 & 0.9773 & 0.9149 & \textbf{0.9932}  \\
 & & RMSE & 0.7969 & 0.2462 & 0.2613 & 0.5062 & \textbf{0.1432}  \\
 \cmidrule(l){2-8}
 & \multirow{2}{*}{DS($kx$)} & $R^2$ & 0.5398 & 0.9029 & 0.8911 & 0.6609 & \textbf{0.9835}  \\
 & & RMSE & 1.1694 & 0.5371 & 0.5689 & 1.0039 & \textbf{0.2216}  \\
 \cmidrule(l){2-8}
 & \multirow{2}{*}{DS($ky$)} & $R^2$ & 0.6026 & 0.9276 & 0.9152 & 0.7195 & \textbf{0.9927} \\
 & & RMSE & 1.0477 & 0.4470 & 0.4841 & 0.8802 & \textbf{0.1419} \\
\bottomrule
\end{tabular}
}
\end{table*}

The quantitative results are summarized in Table \ref{tab:ns_metrics}. FOT-CFM can still provide higher spectral fidelity across all computational budgets, demonstrating a strong ability to preserve the structure of turbulence. In particular, the directional spectrum, which is highly sensitive to high-wavenumber content, clearly reveals the advantage of FOT-CFM in the low-NFE regime, where it substantially outperforms diffusion-based baselines as well as the GAN model. For the radial spectrum, FOT-CFM achieves RS $R^2=0.9767$ at NFE=5, indicating accurate recovery from the inertial range to the dissipation range with very few function evaluations. The visualizations in Fig.~\ref{fig:ns_plots} further corroborate these findings, showing that FOT-CFM reproduces key turbulent structures. At low NFE, it attains the smallest errors among the benchmark methods, indicating the effectiveness of the proposed globally optimal transport coupling in infinite-dimensional functional spaces. Although FFM becomes slightly better on the KDE metric at larger NFEs (e.g., NFE=20 and 100), FOT-CFM remains superior on all spectral metrics, especially the directional spectrum, which best indicates the physical consistency of turbulent structures.

\begin{figure}[h]
\centering
\includegraphics[width=1\textwidth]{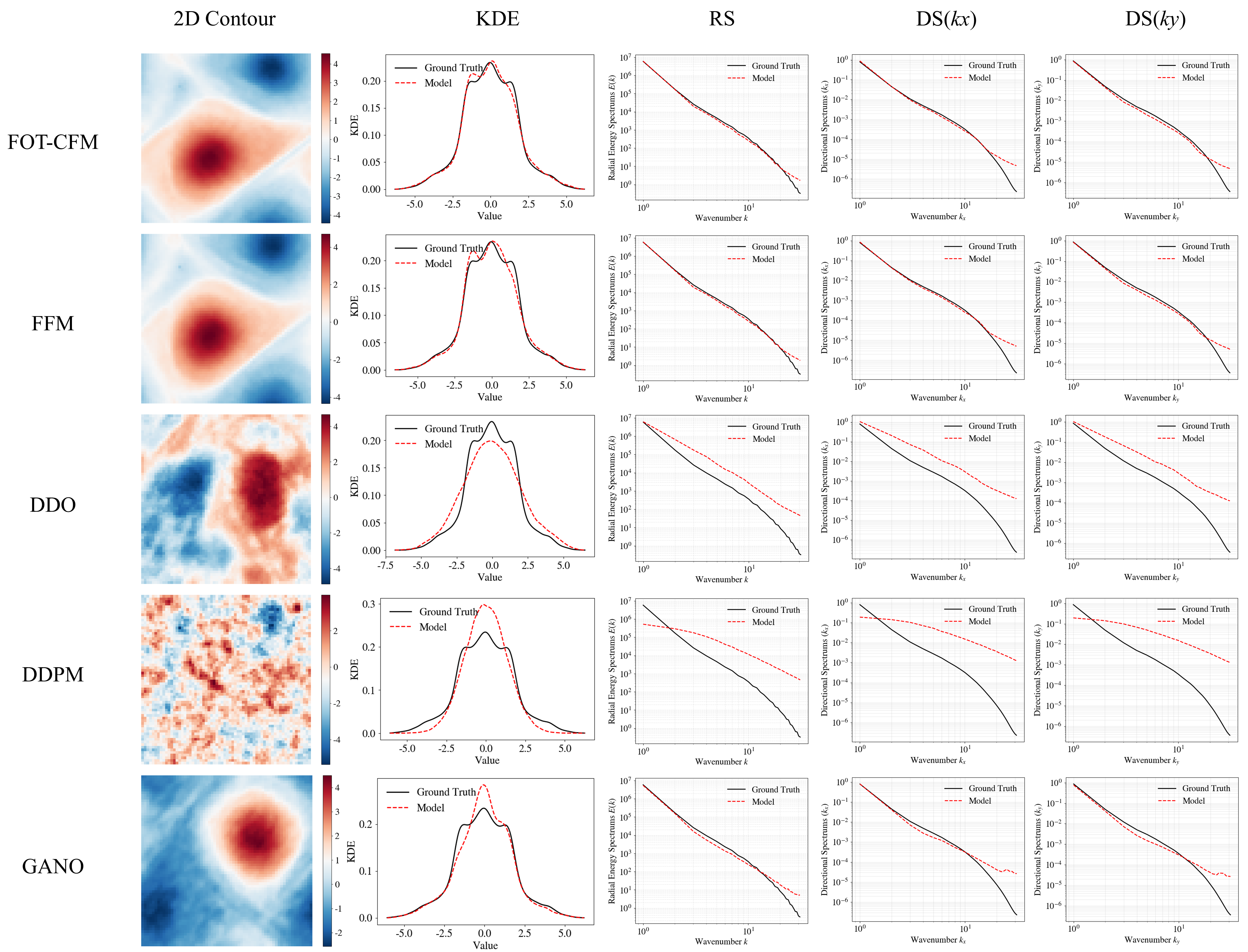}
\caption{Comparison of generative models on the 2D Navier-Stokes equations. Each row presents the results of a specific model (FOT-CFM, FFM, DDO, DDPM, and GANO) at a fixed inference budget (NFE=5).}\label{fig:ns_plots}
\end{figure}

Consistent with the Kolmogorov-flow results, FOT-CFM maintains high generation quality with significantly fewer integration steps. The straighter trajectories induced by functional optimal transport enable accurate sampling even with a simple ODE discretization, whereas diffusion-based approaches typically require more evaluations and more careful numerical treatment to mitigate trajectory curvature in infinite-dimensional functional spaces.

\subsection{Hasegawa-Wakatani Equations}
\label{subsec:HW}

To further evaluate the performance of FOT-CFM beyond the aforementioned public datasets, we consider a more challenging turbulence benchmark drawn from plasma physics. Specifically, we study the Hasegawa–Wakatani equations, which model resistive drift-wave turbulence in magnetized plasmas by coupling the evolution of the density field $n$ and the vorticity field $\omega$:

\begin{subequations}
\begin{align}
    \frac{\partial}{\partial t} n + [\phi,n] + \kappa \,\frac{\partial}{\partial y} \phi &= C \bigl(\phi - n\bigr) + D_{0}\,\nabla^2 n.  \label{hw_1}\\
    \frac{\partial}{\partial t} \omega + [\phi,\omega] &= C \bigl(\phi -  n\bigr) + D_{0}\,\nabla^2 \omega . \label{hw_2}
\end{align}
\end{subequations}
where $\phi$ is the electrostatic potential satisfying $\omega = \nabla^2 \phi$. The reference data is generated using the TOKAM2D \cite{tokam2d_gyselax,tokam2d2018,tokam2d2022} code on a $128 \times 128$ grid. 

A key advantage of function generation is its resolution-invariant formulation. To evaluate the model’s multiscale representational capability, we downsample the training data to $64 \times 64$ while performing inference at a higher resolution of $128 \times 128$. Because the model operates in a continuous functional space, it can produce high-resolution samples without being explicitly trained on $128 \times 128$ data, enabling zero-shot resolution scaling. This capability is particularly important for plasma simulations, where generating high-fidelity reference data is computationally costly. By leveraging the mesh-independent functional optimal transport path, FOT-CFM effectively interpolates the underlying physical fields while preserving fine-scale structures and overall structural integrity.

\begin{table*}[htbp]
\centering
\caption{Comparison on the density $n$ of Hasegawa-Wakatani Equations. We evaluate physical fidelity using spectral metrics (radial and directional) and density consistency metrics, alongside computational efficiency (NFE). \textbf{Bold} indicates the best performance.}
\label{tab:dens_metrics}
\renewcommand{\arraystretch}{1.0}
\resizebox{0.9\textwidth}{!}{
\begin{tabular}{c|c|c|c|c|c|c|c}
\toprule
\multicolumn{3}{c|}{Metrics} & DDPM & FFM & DDO & GANO & FOT-CFM \\
\midrule
\multirow{9}{*}{NFE=100} & \multirow{2}{*}{KDE} & $R^2$ & 0.2400 & 0.9856 & 0.9911 & 0.3412 & \textbf{0.9932} \\
 & & RMSE & 0.0629 & 0.0041 & 0.0046 & 0.0392 & \textbf{0.0038} \\
 \cmidrule(l){2-8}
 & \multirow{2}{*}{RS} & $R^2$ & 0.8735 & 0.9878 & 0.9811 & 0.5673 & \textbf{0.9912} \\
 & & RMSE & 0.5404 & 0.1377 & 0.1528 & 0.9995 & \textbf{0.1309}\\
 \cmidrule(l){2-8}
 & \multirow{2}{*}{DS($kx$)} & $R^2$ & 0.7947 & 0.9704 & 0.9713 & 0.2922 & \textbf{0.9814} \\
 & & RMSE & 0.5851 & 0.1708 & 0.1517 & 1.0864 & \textbf{0.1121} \\
 \cmidrule(l){2-8}
 & \multirow{2}{*}{DS($ky$)} & $R^2$ & 0.8187 & 0.9889 & 0.9832 & 0.3818 & \textbf{0.9891} \\
 & & RMSE & 0.5404 & 0.1338 & 0.1647 & 0.9978 & \textbf{0.1326} \\
\midrule
\multirow{9}{*}{NFE=500} & \multirow{2}{*}{KDE} & $R^2$ & 0.3898 & 0.9896 & 0.9913 & 0.3412 & \textbf{0.9951} \\
 & & RMSE & 0.0384 & 0.0042 & 0.0046 & 0.0392 & \textbf{0.0032} \\
 \cmidrule(l){2-8}
 & \multirow{2}{*}{RS} & $R^2$ & 0.9674 & 0.9902 & 0.9877 & 0.5673 & \textbf{0.9929} \\
 & & RMSE & 0.2742 & 0.1318 & 0.1685 & 0.9995 & \textbf{0.1298} \\
 \cmidrule(l){2-8}
 & \multirow{2}{*}{DS($kx$)} & $R^2$ & 0.9547 & 0.9728 & 0.9767 & 0.2922 & \textbf{0.9825}\\
 & & RMSE & 0.2749 & 0.1694 & 0.1450 & 1.0864 & \textbf{0.1010} \\
 \cmidrule(l){2-8}
 & \multirow{2}{*}{DS($ky$)} & $R^2$ & 0.9557 & 0.9889 & 0.9784 & 0.3818 & \textbf{0.9891} \\
 & & RMSE & 0.2671 & 0.1338 & 0.1864 & 0.9978 & \textbf{0.1326} \\
\midrule
\multirow{9}{*}{NFE=1000} & \multirow{2}{*}{KDE} & $R^2$ & 0.8746 & 0.9956 & 0.9924 & 0.3412 & \textbf{0.9957} \\
 & & RMSE & 0.0174 & 0.0032 & 0.0043 & 0.0392 & \textbf{0.0030} \\
 \cmidrule(l){2-8}
 & \multirow{2}{*}{RS} & $R^2$ & 0.9857 & 0.9928 & 0.9874 & 0.5673 & \textbf{0.9929}  \\
 & & RMSE & 0.1816 & 0.1289 & 0.1708 & 0.9995 & \textbf{0.1281} \\
 \cmidrule(l){2-8}
 & \multirow{2}{*}{DS($kx$)} & $R^2$ & 0.9813 & 0.9828 & \textbf{0.9871} & 0.2922 & 0.9855  \\
 & & RMSE & 0.1765 & 0.1694 & \textbf{0.1464} & 1.0864 & 0.1801 \\
 \cmidrule(l){2-8}
 & \multirow{2}{*}{DS($ky$)} & $R^2$ & 0.9806 & 0.9889 & 0.9778 & 0.3818 & \textbf{0.9891} \\
 & & RMSE & 0.1768 & 0.1338 & 0.1891 & 0.9978 & \textbf{0.1326} \\
\midrule
\multirow{9}{*}{NFE=1500} & \multirow{2}{*}{KDE} & $R^2$ & \textbf{0.9964} & 0.9956 & 0.9925 & 0.3412 & 0.9957 \\
 & & RMSE & \textbf{0.0029} & 0.0032 & 0.0043 & 0.0392 & 0.0030 \\
 \cmidrule(l){2-8}
 & \multirow{2}{*}{RS} & $R^2$ & \textbf{0.9935} & 0.9928 & 0.9873 & 0.5673 & 0.9929  \\
 & & RMSE & \textbf{0.1228} & 0.1289 & 0.1715 & 0.9995 & 0.1281 \\
 \cmidrule(l){2-8}
 & \multirow{2}{*}{DS($kx$)} & $R^2$ & \textbf{0.9894} & 0.9828 & 0.9871 & 0.2922 & 0.9825 \\
 & & RMSE & \textbf{0.1332} & 0.1694 & 0.1464 & 1.0864 & 0.1710 \\
 \cmidrule(l){2-8}
 & \multirow{2}{*}{DS($ky$)} & $R^2$ & \textbf{0.9909} & 0.9889 & 0.9776 & 0.3818 & 0.9891 \\
 & & RMSE & \textbf{0.1212} & 0.1338 & 0.1901 & 0.9978 & 0.1326 \\
\bottomrule
\end{tabular}
}
\end{table*}

\begin{table*}[htbp]
\centering
\caption{Comparison on the potential $\phi$ of Hasegawa-Wakatani Equations. We evaluate physical fidelity using spectral metrics (radial and directional) and density consistency metrics, alongside computational efficiency (NFE). \textbf{Bold} indicates the best performance.}
\label{tab:pot_metrics}
\renewcommand{\arraystretch}{1.0}
\resizebox{0.9\textwidth}{!}{
\begin{tabular}{c|c|c|c|c|c|c|c}
\toprule
\multicolumn{3}{c|}{Metrics} & DDPM & FFM & DDO & GANO & FOT-CFM \\
\midrule
\multirow{9}{*}{NFE=100} & \multirow{2}{*}{KDE} & $R^2$ & 0.1737 & 0.9905 & 0.9893 & 0.8976 & \textbf{0.9991} \\
 & & RMSE & 0.0709 & 0.0039 & 0.0054 & 0.0140 & \textbf{0.0016} \\
 \cmidrule(l){2-8}
 & \multirow{2}{*}{RS} & $R^2$ & 0.5937 & 0.9021 & 0.9220 & 0.3023 & \textbf{0.9928} \\
 & & RMSE & 1.3784 & 0.6765 & 0.6038 & 1.8062 & \textbf{0.1841} \\
 \cmidrule(l){2-8}
 & \multirow{2}{*}{DS($kx$)} & $R^2$ & 0.3781 & 0.8266 & 0.8620 & 0.2007 & \textbf{0.9511} \\
 & & RMSE & 1.5505 & 0.8186 & 0.7303 & 2.1544 & \textbf{0.3931} \\
 \cmidrule(l){2-8}
 & \multirow{2}{*}{DS($ky$)} & $R^2$ & 0.4157 & 0.8430 & 0.8707 & 0.7883 & \textbf{0.9531} \\
 & & RMSE & 1.4803 & 0.7675 & 0.6963 & 0.8910 & \textbf{0.4196} \\
\midrule
\multirow{9}{*}{NFE=500} & \multirow{2}{*}{KDE} & $R^2$ & 0.3669 & 0.9957 & 0.9898 & 0.8976 & \textbf{0.9991} \\
 & & RMSE & 0.0412 & 0.0033 & 0.0052 & 0.0140 & \textbf{0.0016} \\
\cmidrule(l){2-8}
& \multirow{2}{*}{RS} & $R^2$ & 0.9034 & 0.9021 & 0.9265 & 0.3023 & \textbf{0.9927} \\
 & & RMSE & 0.6722 & 0.6765 & 0.5861 & 1.8062 & \textbf{0.1843} \\
 \cmidrule(l){2-8}
 & \multirow{2}{*}{DS($kx$} & $R^2$ & 0.8322 & 0.8266 & 0.8709 & 0.2007 & \textbf{0.9600} \\
 & & RMSE & 0.8054 & 0.8186 & 0.7065 & 2.1544 & \textbf{0.3933} \\
 \cmidrule(l){2-8}
 & \multirow{2}{*}{DS($ky$)} & $R^2$ & 0.8476 & 0.8430 & 0.8763 & 0.7883 & \textbf{0.9530} \\
 & & RMSE & 0.7561 & 0.7675 & 0.6812 & 0.8910 & \textbf{0.4197} \\
\midrule
\multirow{9}{*}{NFE=1000} & \multirow{2}{*}{KDE} & $R^2$ & 0.8798 & 0.9957 & 0.9904 & 0.8976 & \textbf{0.9991} \\
 & & RMSE & 0.0180 & 0.0033 & 0.0051 & 0.0140 & \textbf{0.0016} \\
 \cmidrule(l){2-8}
 & \multirow{2}{*}{RS} & $R^2$ & 0.9266 & 0.9021 & 0.9273 & 0.3023 & \textbf{0.9927} \\
 & & RMSE & 0.5860 & 0.6765 & 0.5832 & 1.8062 & \textbf{0.1843} \\
 \cmidrule(l){2-8}
 & \multirow{2}{*}{DS($kx$)} & $R^2$ & 0.8672 & 0.8266 & 0.8697 & 0.2007 & \textbf{0.9600} \\
 & & RMSE & 0.7165 & 0.8186 & 0.7096 & 2.1544 & \textbf{0.3933} \\
 \cmidrule(l){2-8}
 & \multirow{2}{*}{DS($ky$)} & $R^2$ & 0.8802 & 0.8430 & 0.8790 & 0.7883 & \textbf{0.9530} \\
 & & RMSE & 0.6702 & 0.7675 & 0.6737 & 0.8910 & \textbf{0.4197} \\
\midrule
\multirow{9}{*}{NFE=1500} & \multirow{2}{*}{KDE} & $R^2$ & 0.9965 & 0.9957 & 0.9910 & 0.8976 & \textbf{0.9995} \\
 & & RMSE & 0.0031 & 0.0033 & 0.0049 & 0.0140 & \textbf{0.0012} \\
 \cmidrule(l){2-8}
 & \multirow{2}{*}{RS} & $R^2$ & 0.9229 & 0.9021 & 0.9276 & 0.3023 & \textbf{0.9936} \\
 & & RMSE & 0.6005 & 0.6765 & 0.5818 & 1.8062 & \textbf{0.1713} \\
 \cmidrule(l){2-8}
 & \multirow{2}{*}{DS($kx$)} & $R^2$ & 0.8605 & 0.8266 & 0.8698 & 0.2007 & \textbf{0.9679} \\
 & & RMSE & 0.7343 & 0.8186 & 0.7095 & 2.1544 & \textbf{0.3158} \\
 \cmidrule(l){2-8}
 & \multirow{2}{*}{DS($ky$)} & $R^2$ & 0.8738 & 0.8430 & 0.8799 & 0.7883 & \textbf{0.9663} \\
 & & RMSE & 0.6879 & 0.7675 & 0.6713 & 0.8910 & \textbf{0.3894} \\
\bottomrule
\end{tabular}
}
\end{table*}

The contour plots and spectral curves of the density $n$ and potential $\phi$ are compared in Fig.~\ref{fig:density} and Fig.~\ref{fig:potential}, respectively. Overall, FOT-CFM maintains good performance on this more complex and practically relevant turbulence problem. As shown in the contour plots, FOT-CFM generates coherent turbulent structures without noticeable fragmentation. Consistently, the spectral curves indicate close agreement with the reference results, confirming that the dominant low-frequency structures are faithfully captured.

\begin{figure}[h]
\centering
\includegraphics[width=1\textwidth]{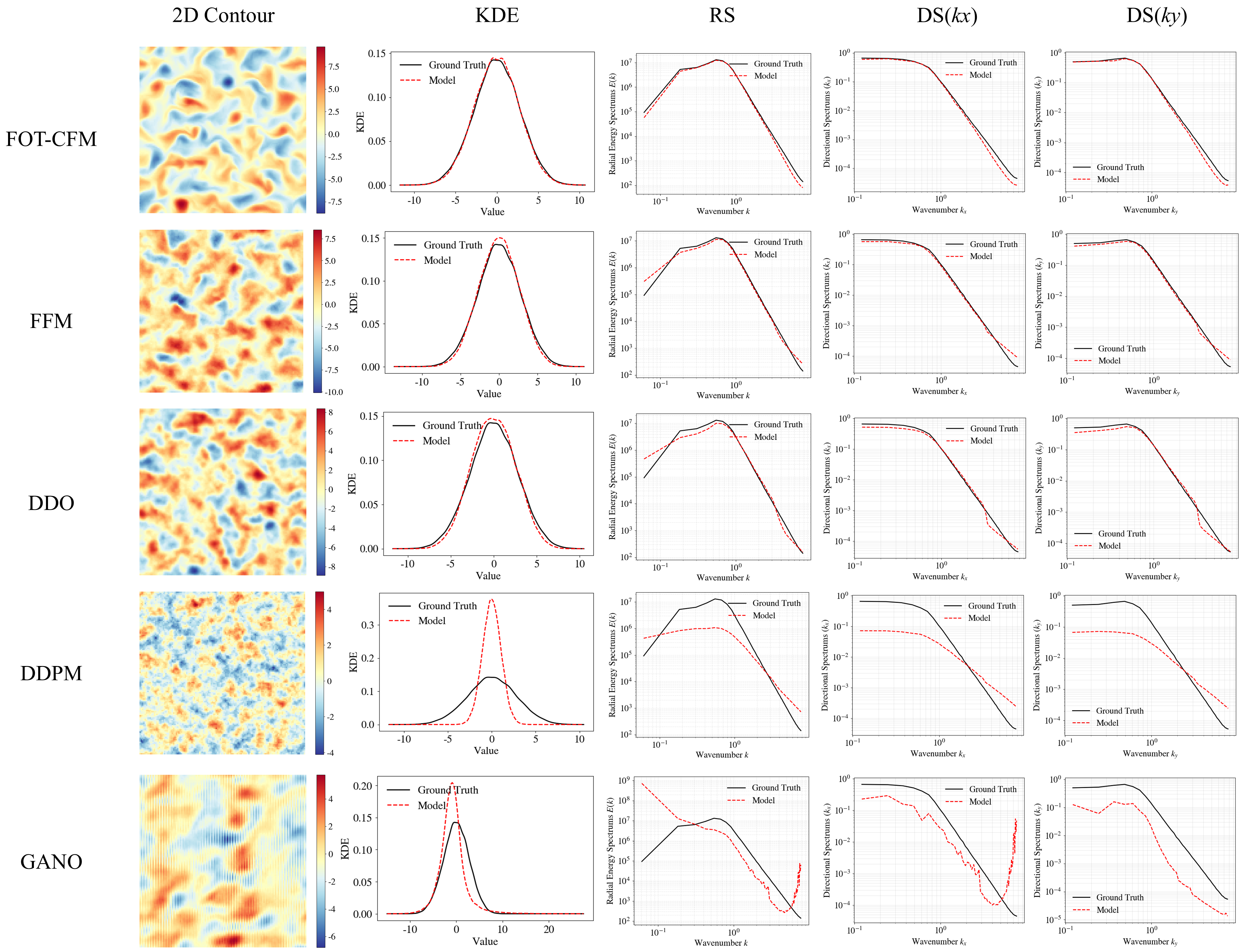}
\caption{Comparison of generative models on the density $n$ of 2D Hasegawa-Wakatani equations. Each row presents the results of a specific model (FOT-CFM, FFM, DDO, DDPM, and GANO) at a fixed inference budget (NFE=100).}\label{fig:density}
\end{figure}

\begin{figure}[h]
\centering
\includegraphics[width=1\textwidth]{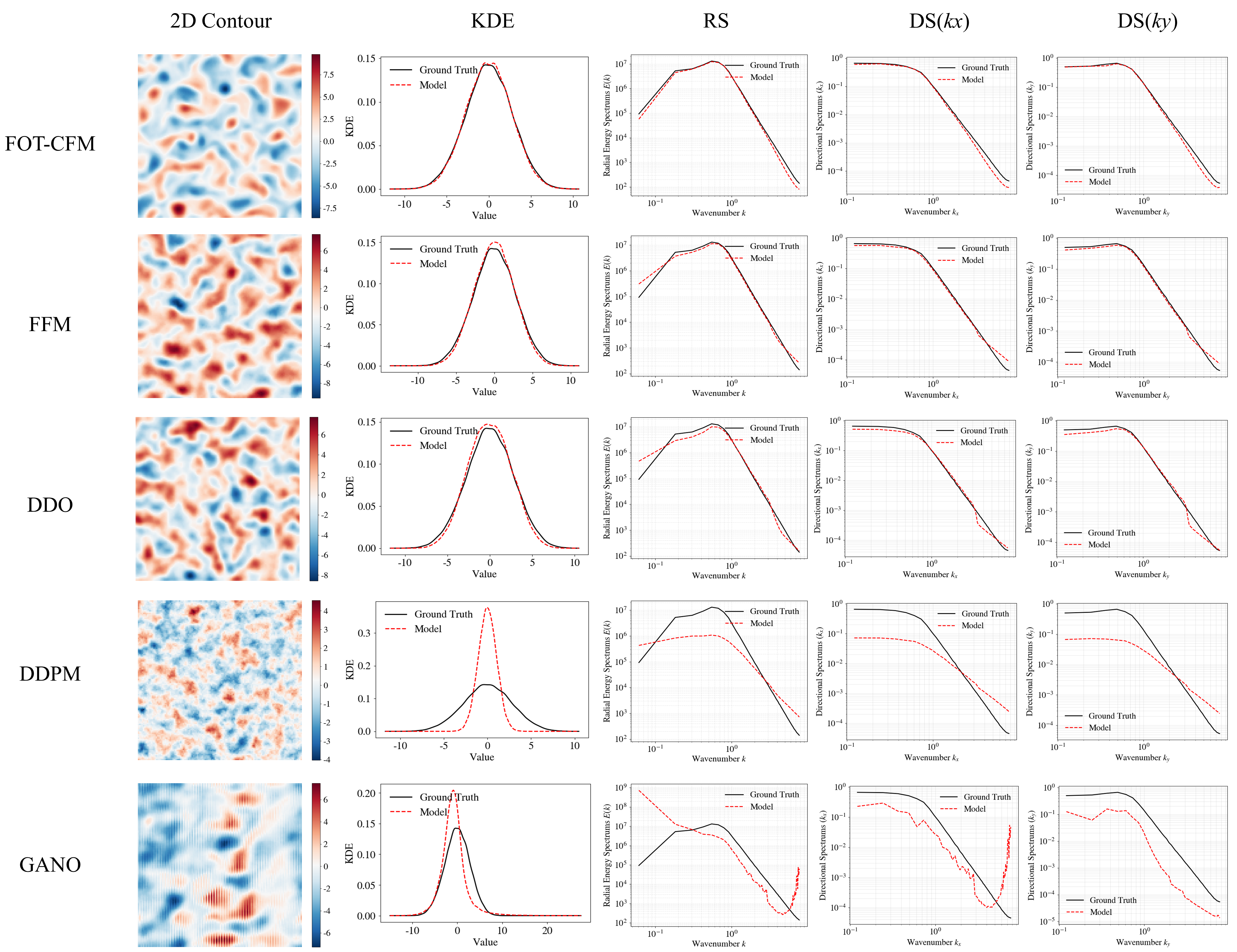}
\caption{Comparison of generative models on the potential $\phi$ of 2D Hasegawa-Wakatani equations. Each row presents the results of a specific model (FOT-CFM, FFM, DDO, DDPM, and GANO) at a fixed inference budget (NFE=100).}\label{fig:potential}
\end{figure}

\section{Conclusion}\label{sec5}
In this work, we presented Functional Optimal Transport Conditional Flow Matching (FOT-CFM), a generative framework for rapid and high-fidelity synthesis of complex scientific turbulence data. By constructing the probability path via functional optimal transport, our approach alleviates high-curvature generation trajectories in infinite-dimensional function spaces, enabling efficient sampling with substantially reduced computational cost.

Across experiments on 2D Kolmogorov flow, Navier–Stokes turbulence, and the Hasegawa–Wakatani system, we demonstrated several key advantages. First, FOT-CFM consistently outperforms state-of-the-art baselines, including DDPM, FFM, DDO, and GANO, in capturing multiscale turbulent structures, achieving strong spectral fidelity and accurately reproducing marginal density statistics. Second, by enforcing a globally optimal coupling, FOT-CFM reduces the inference budget, it produces high-quality samples with fewer NFEs without sacrificing physical consistency. Third, on the more complex and practically relevant TOKAM2D plasma turbulence dataset, FOT-CFM exhibits robust zero-shot scaling. Although trained only on $64 \times 64$ samples, it successfully generates physically consistent $128 \times 128$ density and potential fields, recovering fine-scale features beyond the training resolution.

Future work will extend FOT-CFM to 3D turbulence and investigate its integration with downstream applications, including uncertainty quantification and data-driven closure modeling for extreme-scale simulations.

\appendix

\section*{Appendix}
\addcontentsline{toc}{section}{Appendix}

\setcounter{section}{0}
\renewcommand{\thesection}{\Alph{section}}

\section{Proofs of Corollary and Theorem}\label{app:proofs_section}

\subsection{Proof of Corollary~\ref{cor:existence_uniqueness}}\label{app:proof_cor_exist_unique}
\begin{proof}
Fix $t\in[0,1]$.
Define a linear functional $L$ on $L^2(\mu_t;\mathcal F)$ by
\[
L(\xi)
:=
\int_{\mathcal{F}}\left(\int_{\mathcal{F}}\langle u_t^f(g), \xi(g)\rangle_{\mathcal{F}}\, d\mu_t^f(g)\right)d\nu(f).
\]
By Cauchy--Schwarz and \eqref{eq:mix_integral_identity},
\begin{align*}
|L(\xi)|
&\le
\int_{\mathcal F}
\left(\int_{\mathcal F}\|u_t^f(g)\|_{\mathcal F}^2\,d\mu_t^f(g)\right)^{1/2}
\left(\int_{\mathcal F}\|\xi(g)\|_{\mathcal F}^2\,d\mu_t^f(g)\right)^{1/2}
d\nu(f)\\
&\le
\left(\int_{\mathcal F}\int_{\mathcal F}\|u_t^f(g)\|_{\mathcal F}^2\,d\mu_t^f(g)\,d\nu(f)\right)^{1/2}
\left(\int_{\mathcal F}\int_{\mathcal F}\|\xi(g)\|_{\mathcal F}^2\,d\mu_t^f(g)\,d\nu(f)\right)^{1/2}\\
&=
\left(\int_{\mathcal F}\int_{\mathcal F}\|u_t^f(g)\|_{\mathcal F}^2\,d\mu_t^f(g)\,d\nu(f)\right)^{1/2}
\left(\int_{\mathcal F}\|\xi(g)\|_{\mathcal F}^2\,d\mu_t(g)\right)^{1/2},
\end{align*}
where the last equality used \eqref{eq:mix_integral_identity} with $h(g)=\|\xi(g)\|_{\mathcal F}^2$.
Thus $L$ is a bounded linear functional on the Hilbert space $L^2(\mu_t;\mathcal F)$.
By the Riesz representation theorem, there exists a unique $u_t\in L^2(\mu_t;\mathcal F)$ (unique $\mu_t$-a.e.)
such that
\[
L(\xi)=\int_{\mathcal F}\langle u_t(g),\xi(g)\rangle_{\mathcal F}\,d\mu_t(g),
\qquad \forall\,\xi\in L^2(\mu_t;\mathcal F),
\]
which is exactly \eqref{eq:mix_vector_weak}.
\end{proof}

\subsection{Proof of Theorem~\ref{thm:mixture_continuity}}\label{app:proof_thm_mixture_continuity}
\begin{proof}
Let $\psi$ be an appropriate test function as in Theorem~\ref{thm:mixture_continuity}.
For $\nu$-a.e.\ $f$, Eq.~\eqref{eq:cond_ffm_continuity} holds. Integrating both sides with respect to $d\nu(f)$ and applying
Fubini/Tonelli (justified by the assumed integrability and boundedness of $\nabla_g\psi$), we obtain
\begin{align*}
0
&=
\int_{\mathcal F}\int_0^1\int_{\mathcal F}
\Big(\partial_t \psi(g,t)+\langle u_t^f(g),\nabla_g\psi(g,t)\rangle_{\mathcal F}\Big)\,d\mu_t^f(g)\,dt\,d\nu(f)\\
&=
\int_0^1\left[
\int_{\mathcal F}\partial_t\psi(g,t)\,d\mu_t(g)
+
\int_{\mathcal F}\left(\int_{\mathcal F}\langle u_t^f(g),\nabla_g\psi(g,t)\rangle_{\mathcal F}\,d\mu_t^f(g)\right)d\nu(f)
\right]dt,
\end{align*}
where the first term used \eqref{eq:mix_integral_identity} with $h(g)=\partial_t\psi(g,t)$.
For the second term, fix $t$ and set $\xi_t(g):=\nabla_g\psi(g,t)$.
By the test-function regularity, $\xi_t\in L^2(\mu_t;\mathcal F)$, hence \eqref{eq:mix_vector_weak} yields
\[
\int_{\mathcal F}\left(\int_{\mathcal F}\langle u_t^f(g),\nabla_g\psi(g,t)\rangle_{\mathcal F}\,d\mu_t^f(g)\right)d\nu(f)
=
\int_{\mathcal F}\langle u_t(g),\nabla_g\psi(g,t)\rangle_{\mathcal F}\,d\mu_t(g).
\]
Substituting back proves \eqref{eq:ffm_continuity}.
\end{proof}

\subsection{Proof of Theorem~\ref{thm:objective_equivalence}}\label{app:proof_thm_objective_equivalence}
\begin{proof}
Fix $t\in[0,1]$ and define the time-slice objectives
\begin{align}
\mathcal{J}_{\mathrm{FM}}(\theta;t)
&:= \int_{\mathcal{F}} \| u_\theta(t, g) - u_t(g) \|_{\mathcal{F}}^2 \, d\mu_t(g),\\
\mathcal{J}_{\mathrm{CFM}}(\theta;t)
&:= \int_{\mathcal{F}}\left(\int_{\mathcal{F}} \| u_\theta(t, g) - u_t^f(g) \|_{\mathcal{F}}^2 \, d\mu_t^f(g)\right)d\nu(f).
\end{align}
Expanding $\mathcal{J}_{\mathrm{CFM}}(\theta;t)$ gives
\begin{align*}
\mathcal{J}_{\mathrm{CFM}}(\theta;t)
&=
\int_{\mathcal F}\int_{\mathcal F}
\Big(\|u_\theta(t,g)\|_{\mathcal F}^2
-2\langle u_\theta(t,g),u_t^f(g)\rangle_{\mathcal F}
+\|u_t^f(g)\|_{\mathcal F}^2\Big)\,d\mu_t^f(g)\,d\nu(f)\\
&=: T_1 -2T_2 + T_3.
\end{align*}
By \eqref{eq:mix_integral_identity} applied to $h(g)=\|u_\theta(t,g)\|_{\mathcal F}^2$ (integrable since
$u_\theta(t,\cdot)\in L^2(\mu_t)$), we have
\[
T_1=\int_{\mathcal F}\|u_\theta(t,g)\|_{\mathcal F}^2\,d\mu_t(g).
\]
For the cross term, apply \eqref{eq:mix_vector_weak} with the test vector field $\xi(g)=u_\theta(t,g)\in L^2(\mu_t;\mathcal F)$:
\[
T_2
=\int_{\mathcal F}\int_{\mathcal F}\langle u_\theta(t,g),u_t^f(g)\rangle_{\mathcal F}\,d\mu_t^f(g)\,d\nu(f)
=\int_{\mathcal F}\langle u_\theta(t,g),u_t(g)\rangle_{\mathcal F}\,d\mu_t(g).
\]
Finally, $T_3=C(t)$ as defined in \eqref{eq:C_t_def}, which is independent of $\theta$.
Therefore,
\[
\mathcal{J}_{\mathrm{CFM}}(\theta;t)
=
\int_{\mathcal F}\Big(\|u_\theta(t,g)\|_{\mathcal F}^2-2\langle u_\theta(t,g),u_t(g)\rangle_{\mathcal F}\Big)\,d\mu_t(g)
+ C(t).
\]
On the other hand,
\[
\mathcal{J}_{\mathrm{FM}}(\theta;t)
=
\int_{\mathcal F}\Big(\|u_\theta(t,g)\|_{\mathcal F}^2
-2\langle u_\theta(t,g),u_t(g)\rangle_{\mathcal F}
+\|u_t(g)\|_{\mathcal F}^2\Big)\,d\mu_t(g).
\]
Subtracting yields
\[
\mathcal{J}_{\mathrm{CFM}}(\theta;t)
=
\mathcal{J}_{\mathrm{FM}}(\theta;t)
+ C(t) - \int_{\mathcal F}\|u_t(g)\|_{\mathcal F}^2\,d\mu_t(g),
\]
and taking expectation over $t\sim\mathcal U[0,1]$ gives \eqref{eq:L_equivalence}.
Since the difference is independent of $\theta$, the gradients are identical under standard differentiation-under-the-integral conditions.
\end{proof}

\subsection{Proof of Theorem~\ref{thm:minibatch_ot_consistency}}\label{app:proof_thm_minibatch_ot_consistency}
\begin{proof}
For the empirical measures
\[
\hat{\mu}_0^B=\frac{1}{B}\sum_{i=1}^B\delta_{f_0^{(i)}},
\qquad
\hat{\nu}^B=\frac{1}{B}\sum_{j=1}^B\delta_{f_1^{(j)}},
\]
any coupling \(\pi\in\Pi(\hat{\mu}_0^B,\hat{\nu}^B)\) can be written as
\[
\pi=\sum_{i=1}^B\sum_{j=1}^B \gamma_{ij}\,\delta_{(f_0^{(i)},\,f_1^{(j)})},
\]
where \(\Gamma=(\gamma_{ij})\) is a nonnegative matrix satisfying
\[
\sum_{j=1}^B\gamma_{ij}=\frac{1}{B},
\qquad
\sum_{i=1}^B\gamma_{ij}=\frac{1}{B}.
\]
Equivalently, \(B\Gamma\) is doubly stochastic. Since the quadratic transport objective is linear in \(\Gamma\), an optimizer may be chosen at an extreme point of the Birkhoff polytope, hence at a permutation matrix. Therefore, an optimal empirical coupling may be taken in the form
\[
\hat{\pi}_B
=
\frac{1}{B}\sum_{i=1}^B
\delta_{\bigl(f_0^{(i)},\,f_1^{(\sigma_B(i))}\bigr)},
\]
for some \(\sigma_B\in S_B\), which yields \eqref{eq:empirical_permutation_coupling}.

Since \(\mathcal{F}\) is separable and \(\mu_0,\nu\in\mathcal P_2(\mathcal{F})\), the empirical measures \(\hat{\mu}_0^B\) and \(\hat{\nu}^B\) converge weakly almost surely to \(\mu_0\) and \(\nu\), respectively. Moreover, by the strong law of large numbers,
\[
\int_\mathcal{F} \|x\|_\mathcal{F}^2\,d\hat{\mu}_0^B(x)
=
\frac{1}{B}\sum_{i=1}^B \|f_0^{(i)}\|_\mathcal{F}^2
\longrightarrow
\int_\mathcal{F} \|x\|_\mathcal{F}^2\,d\mu_0(x),
\]
and similarly,
\[
\int_\mathcal{F} \|y\|_\mathcal{F}^2\,d\hat{\nu}^B(y)
=
\frac{1}{B}\sum_{j=1}^B \|f_1^{(j)}\|_\mathcal{F}^2
\longrightarrow
\int_\mathcal{F} \|y\|_\mathcal{F}^2\,d\nu(y),
\]
almost surely as \(B\to\infty\). Hence weak convergence together with convergence of second moments implies
\[
W_2(\hat{\mu}_0^B,\mu_0)\to 0,
\qquad
W_2(\hat{\nu}^B,\nu)\to 0,
\]
which proves \eqref{eq:empirical_measure_convergence}.

We prove that \(\{\hat{\pi}_B\}_{B\ge1}\) is tight in \(\mathcal P(\mathcal F\times\mathcal F)\).
Since \(\hat{\mu}_0^B\to\mu_0\) and \(\hat{\nu}^B\to\nu\) weakly on the Polish space \(\mathcal F\), the two families \(\{\hat{\mu}_0^B\}_{B\ge1}\) and \(\{\hat{\nu}^B\}_{B\ge1}\) are tight.
Hence, for any \(\varepsilon>0\), there exist compact sets
\(K_0,K_1\subset\mathcal F\) such that
\[
\hat{\mu}_0^B(K_0)\ge 1-\frac{\varepsilon}{2},
\qquad
\hat{\nu}^B(K_1)\ge 1-\frac{\varepsilon}{2},
\qquad \forall\, B\ge1.
\]
Therefore, for every \(B\),
\begin{align*}
\hat{\pi}_B\big((K_0\times K_1)^c\big)
&\le
\hat{\pi}_B(K_0^c\times \mathcal F)
+
\hat{\pi}_B(\mathcal F\times K_1^c)\\
&=
\hat{\mu}_0^B(K_0^c)+\hat{\nu}^B(K_1^c)
\le \varepsilon.
\end{align*}
Thus \(\{\hat{\pi}_B\}_{B\ge1}\) is tight in \(\mathcal P(\mathcal F\times\mathcal F)\).
Since \(\mathcal F\times\mathcal F\) is Polish, Prokhorov's theorem implies that
every subsequence of \(\{\hat{\pi}_B\}_{B\ge1}\) admits a further weakly convergent subsequence.

Let \(\{\hat{\pi}_{B_k}\}_{k\ge1}\) be an arbitrary subsequence.
By tightness, passing to a further subsequence if necessary, we may assume
\[
\hat{\pi}_{B_k}\rightharpoonup \bar{\pi}
\qquad\text{in }\mathcal P(\mathcal F\times\mathcal F).
\]

Since \(\hat{\pi}_{B_k}\) has marginals \(\hat{\mu}_0^{B_k}\) and \(\hat{\nu}^{B_k}\), for any bounded continuous \(\varphi:\mathcal{F}\to\mathbb R\),
\[
\int_\mathcal{F} \varphi(x)\,d(P_1)_\#\hat{\pi}_{B_k}(x)
=
\int_{\mathcal{F}\times \mathcal{F}}\varphi(x)\,d\hat{\pi}_{B_k}(x,y)
=
\int_\mathcal{F} \varphi(x)\,d\hat{\mu}_0^{B_k}(x)
\to
\int_\mathcal{F} \varphi(x)\,d\mu_0(x),
\]
and likewise
\[
\int_\mathcal{F} \varphi(y)\,d(P_2)_\#\hat{\pi}_{B_k}(y)
=
\int_{\mathcal{F}\times \mathcal{F}}\varphi(y)\,d\hat{\pi}_{B_k}(x,y)
=
\int_\mathcal{F} \varphi(y)\,d\hat{\nu}^{B_k}(y)
\to
\int_\mathcal{F} \varphi(y)\,d\nu(y).
\]
Therefore,
\[
(P_1)_\#\bar{\pi}=\mu_0,
\qquad
(P_2)_\#\bar{\pi}=\nu,
\]
so \(\bar{\pi}\in\Pi(\mu_0,\nu)\).

To prove optimality of \(\bar{\pi}\), define
\[
J_B:=\int_{\mathcal{F}\times \mathcal{F}}\|x-y\|_\mathcal{F}^2\,d\hat{\pi}_B(x,y)
=
W_2^2(\hat{\mu}_0^B,\hat{\nu}^B).
\]
By \eqref{eq:empirical_measure_convergence} and the continuity of \(W_2\),
\[
J_B\to W_2^2(\mu_0,\nu)
\qquad\text{almost surely.}
\]
Since the cost \((x,y)\mapsto \|x-y\|_\mathcal{F}^2\) is nonnegative and lower semicontinuous on \(\mathcal{F}\times \mathcal{F}\), the Portmanteau theorem gives
\[
\int_{\mathcal{F}\times \mathcal{F}}\|x-y\|_\mathcal{F}^2\,d\bar{\pi}(x,y)
\le
\liminf_{k\to\infty}
\int_{\mathcal{F}\times \mathcal{F}}\|x-y\|_\mathcal{F}^2\,d\hat{\pi}_{B_k}(x,y)
=
W_2^2(\mu_0,\nu).
\]
On the other hand, since \(\bar{\pi}\in\Pi(\mu_0,\nu)\),
\[
\int_{\mathcal{F}\times \mathcal{F}}\|x-y\|_\mathcal{F}^2\,d\bar{\pi}(x,y)
\ge
W_2^2(\mu_0,\nu).
\]
Thus
\[
\int_{\mathcal{F}\times \mathcal{F}}\|x-y\|_\mathcal{F}^2\,d\bar{\pi}(x,y)
=
W_2^2(\mu_0,\nu),
\]
which proves \eqref{eq:empirical_plan_optimality}.

Finally, assume the population quadratic OT problem admits a unique optimal coupling \(\pi^\ast\).
Let \(\{\hat{\pi}_{B_k}\}_{k\ge1}\) be an arbitrary subsequence.
By tightness, it admits a further weakly convergent subsequence, and by the previous argument
every such subsequential limit must equal \(\pi^\ast\).
Therefore every subsequence of \(\{\hat{\pi}_B\}_{B\ge1}\) has a further subsequence converging to \(\pi^\ast\),
which implies
\[
\hat{\pi}_B\rightharpoonup \pi^\ast
\qquad\text{almost surely.}
\]
For each fixed \(t\in[0,1]\), the interpolation map
\[
T_t(x,y):=(1-t)x+ty
\]
is continuous from \(\mathcal{F}\times \mathcal{F}\) into \(\mathcal{F}\). Therefore, by continuity of pushforward under weak convergence,
\[
(T_t)_\#\hat{\pi}_B \rightharpoonup (T_t)_\#\pi^\ast,
\qquad \forall\, t\in[0,1],
\]
which proves \eqref{eq:minibatch_ot_geodesic_convergence}.
\end{proof}

\section{Experiment Details}\label{app:others}
For FOT-CFM, FFM, DDPM and DDO, the architecture used is the FNO implemented in the
neuraloperator package \cite{li2020fourier, kovachki2023neural}. For GANO, we directly use the FNO-based model architectures for both the discriminator and generator implemented by Rahman et al. \cite{rahman2022generative}. Each model experimented with relies on noise sampled from a Gaussian measure. In current work, we consider a mean-zero Gaussian process (GP) parametrized by a Matérn kernel with $\nu=0.5$, which follows the setting of \cite{ffm}. The kernel parameters, including the variance and the length scale, are fine-tuned via grid search. The model-specific hyperparameters are directly adopted from \cite{ffm}. All models are implemented using PyTorch 2.2.1 \cite{paszke2019pytorch} and trained on an NVIDIA A100 GPU using the Adam optimizer \cite{kingma2014adam}.

\begin{itemize}
    \item \textbf{Kolmogorov Flow} This dataset consists of Kolmogorov flow solutions at a resolution of $64\times64$. To improve training efficiency, we randomly selected 10,000 samples from the dataset \cite{li2022learning} for training. For FOT-CFM, FFM, DDPM, and DNO, we use four Fourier layers with 32 modes, 64 hidden channels, 256 lifting channels, and 256 projection channels, together with the GeLU activation function \cite{hendrycks2016gaussian}. For GANO, we also use 32 modes, but reduce the number of hidden channels to 32 due to memory constraints. All models are trained for 500 epochs with a batch size of 128. We use the Adam optimizer with an initial learning rate of $1\times10^{-4}$. The learning rate follows a two-stage warmup plus cosine-annealing schedule: during the first $10\%$ of training epochs, a linear warmup increases the learning rate from $1\times10^{-10}$ (i.e., $10^{-6}$ times the base learning rate) to $1\times10^{-4}$; during the remaining $90\%$ of epochs, the learning rate is smoothly decayed via cosine annealing, with a minimum learning rate of $1\times10^{-6}$. This schedule improves optimization stability in the early stage and promotes smoother convergence in the later stage of training.

    \item \textbf{Navier-Stokes Equations} This dataset is adopted from \cite{li2020fourier}, which contains solutions of the Navier-Stokes equations. For training efficiency, we randomly sample 20,000 frames from the original dataset. The model architecture settings are the same as those used for the Kolmogorov flow experiments. We also use the same two-stage warmup plus cosine-annealing learning rate schedule, but set the initial learning rate to $5\times10^{-4}$. All models are trained for 500 epochs with a batch size of 128.

    \item \textbf{Hasegawa-Wakatani Equations} This dataset is generated using the official TOKAM2D repository\footnote{\url{https://github.com/gyselax/tokam2d}}, which is used for plasma turbulence research. In the governing equations, the adiabatic coefficient is set to 1, and the dissipation coefficient is set to 0.01. The domain lengths in both the $x$ and $y$ directions (normalized by the reference Larmor radius) are 51.5. The original simulation resolution is $128\times128$, and the data are downsampled to $64\times64$ for training in order to verify the resolution-invariant capability. For this case, we use an 8-layer FNO backbone, which provides a larger receptive field and stronger representation capacity for the more complex plasma turbulence dynamics. The architecture retains 64 Fourier modes with a hidden width of 64 channels. In total, 18,000 frames are used for training. All models are trained for 1000 epochs, and the initial learning rate is set to $5\times10^{-5}$. We use the same two-stage warmup plus cosine-annealing learning rate schedule as in the previous experiments.
\end{itemize}

\section*{Acknowledgement}
The authors also acknowledge the support from the National Research Foundation, Singapore. The authors would like to acknowledge the SAFE team for providing access to the TOKAM2D code, which was essential for the numerical simulations carried out in this study


\end{document}